%% file: ijcai26.tex
\documentclass{article}
\usepackage{ijcai26}

\usepackage{times}
\usepackage{soul}
\usepackage{url}
\usepackage[hidelinks]{hyperref}
\usepackage[utf8]{inputenc}
\usepackage[small]{caption}
\usepackage{graphicx}
\usepackage{amsmath}
\usepackage{amsthm}
\usepackage{booktabs}
\usepackage{algorithm}
\usepackage[noend]{algorithmic}
\usepackage[switch]{lineno}
\usepackage{natbib}

\newtheorem{example}{Example}
\newtheorem{theorem}{Theorem}

\title{Probabilistic Verification of Recurrent Neural Networks for\\ Single and Multi-Agent Reinforcement Learning}

\author{
Luca Marzari$^1$
\And
Enrico Marchesini$^2$\\
\affiliations
$^1$TU Wien, Vienna, Austria\\
$^2$Massachusetts Institute of Technology, Cambridge (MA), USA\\
\emails
luca.marzari@tuwien.ac.at,
emarche@mit.edu
}

\input{packages}

\begin{document}

\maketitle

\begin{abstract}
    History-dependent policies induced by recurrent neural networks (RNNs) rely on latent hidden state dynamics, making verification in partially observable reinforcement learning (RL) challenging. Existing RNN verification tools typically rely on restrictive modeling assumptions or coarse over-approximations of the hidden state space, which can lead to overly conservative or inconclusive results. We propose \textbf{RNN} \textbf{Pro}babilistic \textbf{Ve}rification (\texttt{RNN-ProVe}), a probabilistic framework that \emph{estimates the likelihood} of undesired behaviors in RNN-based policies. \texttt{RNN-ProVe} uses policy-driven sampling to approximate the set of hidden states that are feasible under a trained policy, and derives statistical error bounds to produce bounded-error, high-confidence estimates of behavioral violations. Experiments on partially observable single-agent and cooperative multi-agent tasks show that \texttt{RNN-ProVe} yields more quantitative, feasibility-aware probabilistic guarantees than existing tools, while scaling to recurrent and multi-agent settings.

\end{abstract}

\input{sections/introduction}
\input{sections/preliminaries}

\input{sections/method}
\input{sections/experiments}
\input{sections/discussion}

\clearpage


\bibliographystyle{named}
\bibliography{ijcai26}

\clearpage

\onecolumn
\input{sections/appendix}

\end{document}

%% file: packages.tex
\usepackage{amsmath}
\usepackage{amssymb}
\usepackage{mathrsfs}
\usepackage{amscd}
\usepackage{amsfonts}

\usepackage{lipsum}  

\usepackage{enumitem}   
\usepackage{wrapfig}    
\usepackage{xcolor}
\usepackage{etoolbox}
\usepackage{tcolorbox}
\usepackage{makecell}
\usepackage{pifont}
\usepackage{graphics,color}
\usepackage{dsfont}
\usepackage[table]{xcolor}
\usepackage[bottom]{footmisc} 

\newtheorem{definition}{\textbf{Definition}}

\newtheorem{lemma}{\textbf{Lemma}}

\usepackage{tikz,ifthen}
\usetikzlibrary{arrows,trees,backgrounds,automata,shapes,decorations,plotmarks,fit,calc,positioning,shadows,chains}

\definecolor{color0}{RGB}{240, 78, 64} 
\definecolor{color4}{RGB}{60, 220, 125} 
\definecolor{color6}{RGB}{120, 100, 200} 
\definecolor{color7}{RGB}{107, 100, 200} 

\definecolor{nnedgecolor}{RGB}{90,90,90}
\tikzstyle{every pin edge}=[<-,shorten <=1pt]
\tikzstyle{every path}=[draw=color7!50]
\tikzstyle{neuron}=[circle,fill=black!25,minimum size=17pt,inner sep=0pt]
\tikzstyle{input neuron}=[neuron, fill=color4]
\tikzstyle{output neuron}=[neuron, fill=color0]
\tikzstyle{hidden neuron}=[neuron, fill=color6]
\tikzstyle{annot} = [text width=4em, text centered]
\tikzstyle{nnedge} = [-{stealth},shorten >=0.1cm, shorten <=0.05cm,line 
width=0.8pt,nnedgecolor]
\tikzstyle{nnedge_t} = [-{dashed},shorten >=0.1cm, shorten <=0.05cm,line 
width=0.8pt,nnedgecolor]
\usetikzlibrary{calc}

%% file: sections/introduction.tex
\section{Introduction}
\label{sec:introduction}
Recurrent neural networks (RNNs) are widely used in reinforcement learning (RL) to handle partial observability and multiple agents \citep{aydeniz_2024, valuefact}. Such RNN-based policies learn hidden-state dynamics that encode past observations and actions, making decision-making depend on the interaction \emph{history} rather than only the current input. Verifying such policies is challenging, as most existing verification tools are designed for feedforward neural networks \citep{LiuSurvey, ModelVerification}.

When applied to RNNs, verification is typically restricted to robustness analysis, where networks are checked over a single transition by bounding the effect of perturbations to a fixed input on the corresponding output. While effective for local robustness, this abstraction fails to capture the sequential and history-dependent nature of recurrent RL agents. In particular, assessing whether an RNN-based policy performs unsafe or undesired actions (or behaviors, interchangeably) requires reasoning about which hidden states can arise from \emph{feasible, policy-induced histories}, that is, interactions the agent can actually encounter during execution (Fig.~\ref{fig:overview}). Existing approaches addressing this challenge typically rely on explicit transition models to reason about recurrent policies, and verification remains computationally challenging even under these assumptions \citep{akintunde2019verification,carr2021verifiable,everett2021reachability}. Moreover, these methods are limited in both scalability and expressivity: they fail to scale to multiple agents or high-dimensional spaces \citep{marchesini2025rl2grid, marl2grid_iclr2026}, and often yield only binary certificates (e.g., \emph{safe or unsafe}). Such binary outcomes are frequently uninformative in practice, as policies may differ substantially in how often violations occur across different feasible histories. In principle, sound verification of RNN-based policies would require enumerating all reachable hidden-state configurations during execution and checking whether any of them lead to undesired behavior. However, this requirement is equivalent to computing neural network preimage measures for which a desired behavior holds \citep{INVPROP, zhang2024provable}, a problem that is \#P-hard due to the nonlinearity and nonconvexity of neural networks \citep{CountingProve}. Hence, under standard complexity assumptions, obtaining an exact characterization of the feasible-history space is computationally infeasible. This observation suggests that, for recurrent policies in RL, exact worst-case verification is not only computationally prohibitive but also misaligned with the structure of history-dependent decision making.

\begin{figure}[t]
    \centering
    \includegraphics[width=.9\linewidth]{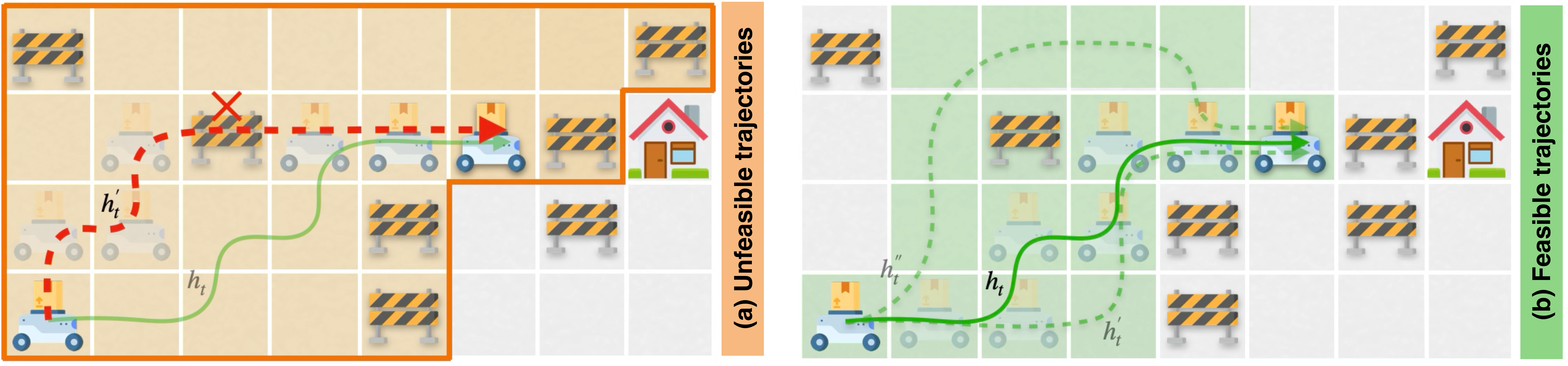}
    \caption{(a) Interval-based verification over-approximates the hidden state space, admitting infeasible histories. (b) \texttt{RNN-ProVe} assesses policies over probabilistically feasible histories, yielding more precise and informative certificates.}
    \label{fig:overview}
    \vspace{-5pt}
\end{figure}

These limitations motivate a probabilistic and quantitative perspective on verification for RNN-based policies. Rather than enumerating all feasible hidden states exactly, we aim to estimate the likelihood of undesired behavior by quantifying how much of the feasible-history space leads to behavioral violations under a trained policy. To this end, we introduce \textbf{RNN} \textbf{Pro}babilistic \textbf{Ve}rification (\texttt{RNN-ProVe}), a probabilistic framework that approximates the policy-induced feasible hidden state space using policy-driven sampling with explicit statistical error bounds, and produces bounded-error, high-confidence verification certificates. At its core, \texttt{RNN-ProVe} learns a \emph{feasibility oracle} that distinguishes feasible from infeasible hidden state representations with respect to the distribution of histories induced by the policy during training. The oracle is implemented as a classifier trained on feasible representations encoded by the policy. \texttt{RNN-ProVe} then performs quantitative verification over probabilistically feasible histories identified by the classifier. Additionally, we derive statistical confidence bounds to determine the required sample sizes for verification, ensuring that the resulting certificates have provable error bounds and high statistical confidence.

We evaluate \texttt{RNN-ProVe} on single-agent and cooperative multi-agent partially observable tasks, demonstrating that the framework scales to settings in which existing verification tools become overly conservative or inapplicable, while providing informative probabilistic assessments of behavioral risk. Our results suggest that probabilistic verification over feasible histories is not merely a heuristic relaxation, but a principled response to fundamental computational barriers. These outcomes motivate the following contributions:
\begin{itemize}[noitemsep]
    \item We formulate the problem of \emph{probabilistic verification} of recurrent RL policies under partial observability, explicitly accounting for the set of hidden states that are feasible under a trained policy.
    \item We propose \texttt{RNN-ProVe}, a policy-driven probabilistic verification framework that combines feasibility learning with Monte Carlo estimation to quantitatively assess behavioral violations in RNN-based policies.
    \item We extend the problem to fully cooperative multi-agent RL (MARL) and derive bounded-error, high-confidence guarantees for feasible-history identification and violation estimation, yielding a tractable approximation to an otherwise \#P-hard verification problem.
\end{itemize}

More broadly, this work advocates probabilistic verification over feasible histories as a principled foundation for reasoning about safety and reliability in history-dependent decision-making systems.


%% file: sections/preliminaries.tex
\section{Preliminaries and Related Work}\label{sec:preliminaries}

At time $t$, an RNN produces an output $y_t$ by using a hidden state representation $h_t$ of an input sequence $\mathbf{x} = (x_1, \dots, x_t)$.
\begin{figure}[b]
    \centering
    \vspace{-5pt}
    \includegraphics[width=0.7\linewidth]{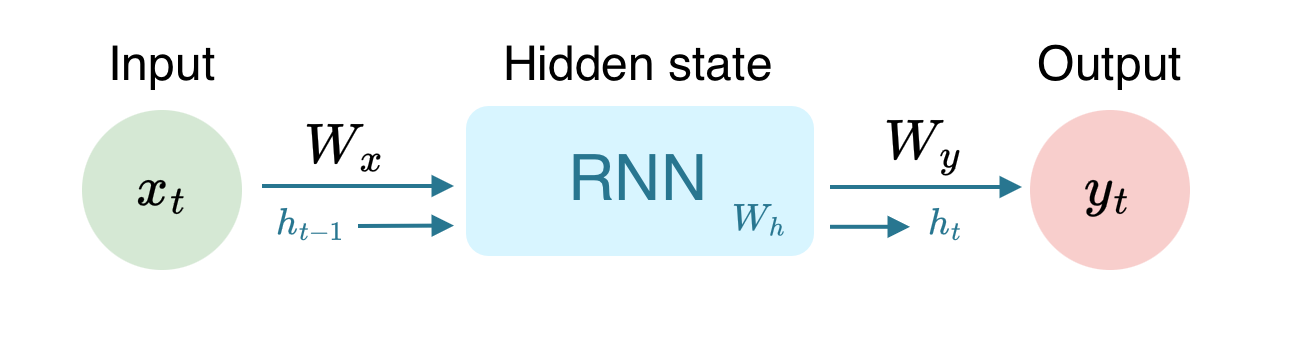}
    \vspace{-8pt}
    \caption{An explanatory RNN transition.}
    \label{fig:rnn}
\end{figure}
Following the illustration of Fig. \ref{fig:rnn}, the transition function for a vanilla RNN is $h_t = \sigma(W_{x} x_t + W_{h} h_{t-1} + b_h)$, where $W_{x},  W_{h}$ are the input and recurrent weight matrices, $b_h$ is a bias vector, and $\sigma$ is typically a non-linear activation function. The output is derived via a linear transformation as $y_t = W_{y} h_t + b_y$, where $W_{y}$ is the output weight matrix and $b_y$ is another bias vector. This recursive structure enables RNNs, as well as gated variants such as LSTMs \citep{LSTM} and GRUs \citep{GRU}, to model temporal dependencies over input sequences.

\paragraph{RNNs in RL.}
In partially observable RL, an agent typically learns a policy $\pi : \mathcal{H} \rightarrow \Delta(\mathcal{A})$; where $\mathcal{H}$ denotes the agent's action-observation history $(o_0, a_0, \ldots, o_{t-1})$, $\mathcal{A}$ is the set of actions modeling the RNN's output $y$, and $o \in \mathcal{O}$ are observations (i.e., RNN's input $x$) distributed following a stochastic transition function $T_{\mathcal{O}} : \mathcal{S} \times \mathcal{A} \to \Delta(\mathcal{O})$ (where $\mathcal{O}, \mathcal{S}$ are the finite sets of observations and states, respectively) \citep{pomdp}. In practice, at time $t$, $\pi$ uses the RNN's hidden state $h_t$ as a compressed history representation that depends on: (i) the previous hidden state $h_{t-1}$; and (ii) the current observation $o_t$, to output an action.\footnote{For simplicity, we use $h, \mathcal{H}$ for both histories and hidden states.}
When tackling problems with $\mathcal{N}$ agents, we consider the setting where each agent $i \in \mathcal{N}$ has a local observation space $\mathcal{O}^i$ and conditions its policy on the local action-observation history \citep{decpomdp}. To achieve this decentralization, centralized training with decentralized execution (CTDE) is the de facto standard paradigm \citep{amato_survey, aydeniz_2025}. In CTDE, agents leverage privileged information during training (including the state $s \in \mathcal{S}$) while learning local history-based policies for execution~\citep{papoudakis2021benchmarking}. Similarly, \texttt{RNN-ProVe} accesses the state while keeping decentralized execution. Thus, our framework is consistent with established practice in partially observable RL.

\paragraph{RNN Verification.} Although RNNs are commonly used in partially observable RL tasks, existing RNN verification methods are developed primarily for robustness analysis. Robustness verification certifies the stability of network outputs under bounded input perturbations over a fixed time horizon.

Without loss of generality, consider verifying an RNN with a single scalar output, where the robustness requirement is to ensure that the output satisfies a given input-output property \citep{LiuSurvey}. Let $f$ denote the RNN transition dynamics such that $h_t = f(h_{t-1}, x_t)$ for $t \in \{1,\ldots,T\}$, where $T$ is the verification horizon. The property is encoded by appending a linear layer that outputs a margin value \citep{bcrown}. Specifically, the final output is defined as $y_T = c^\top h_T + b_y$, where $y_T > 0$ indicates that the property is satisfied. In more detail, let $\mathcal{H}_0 = \{h_0\}$ be the initial hidden state set and let $\mathcal{X}_{1:T} = \{\mathcal{X}_1, \ldots, \mathcal{X}_T\}$ denote the admissible input sets, where each $\mathcal{X}_t = [\underline{x}_t, \overline{x}_t]$ represents an $\ell_\infty$-ball of radius $\varepsilon$ around a nominal input. The over-approximated reachable hidden state sets are computed recursively as $\mathcal{H}_t = \{ f(h, x) \mid h \in \mathcal{H}_{t-1},\ x \in \mathcal{X}_t \},$
and the corresponding output set is $\mathcal{Y}_T = \{ y_T \mid h \in \mathcal{H}_T \}$.\footnote{We note that these reachability-based methods do not consider which hidden states can arise from feasible, policy-induced histories and can thus produce misleading outcomes.} Robustness verification then amounts to checking whether $\min(\mathcal{Y}_T) > 0$.\footnote{We provide a numerical example in Appendix \ref{appendix:numerical_example}.}

\begin{definition}[Robustness \textsc{RNN-Verification}]\label{def:rnn_ver_robustness}

\phantom{a}

    {\bf Input}: A tuple 
    $\mathcal{T}=\langle f, \mathcal{H}_0, \mathcal{X}_{1:T}, \pi\rangle$.
    
    {\bf Output}:   $\texttt{robust} \iff 
    \forall y \in \mathcal{Y}_T, y > 0$.\\ Equivalently, this requires proving that $\min(\mathcal{Y}_T) >0$.
\end{definition}

\paragraph{Related Work.}
Several methods have been proposed for RNNs robustness verification. \cite{akintunde2019verification} unrolls the network over a finite time horizon, reducing it to a feedforward architecture. While effective, this strategy incurs exponential complexity in the horizon length. \cite{jacoby2020verifying} infers inductive invariants over hidden states to remove explicit horizon dependence. However, synthesizing precise invariants for nonlinear networks is computationally difficult and often yields conservative results. To improve scalability, interval- and relaxation-based reachability analyses are used within branch-and-bound frameworks \citep{ko2019popqorn, babSplitRelu, acrown, mohammadinejad2021diffrnn, du2021cert, tran2023rnnreach, ECAI25}. These methods over-approximate reachable hidden states by assuming step-wise bounded input perturbations and targeting local robustness properties. Unlike robustness-based approaches that estimate output stability under bounded or stochastic input perturbations, our work quantifies how often undesired behaviors arise over the set of \emph{policy-induced feasible histories}, addressing a fundamentally different source of uncertainty. This gap increases over long horizons, as over-approximation errors accumulate.

Beyond robustness, some works aim to verify recurrent policies. \citet{carr2021verifiable} propose extracting finite-state automata or abstract MDPs from trained RNN policies. While conceptually powerful, these approaches rely on discretization heuristics that scale exponentially with the dimensionality of the hidden state, restricting their applicability to small-scale RNNs. \citet{everett2021reachability} perform backward reachability analysis directly in the state-action space. This method assumes full knowledge of the environment dynamics and deterministic transitions, assumptions that are rarely satisfied in complex, partially observable RL settings. Moreover, maintaining exact or even over-approximated reachable sets becomes intractable as the trajectory length increases.

Hence, existing approaches rely on strong modeling assumptions or suffer from severe scalability issues due to the combinatorial explosion inherent in abstracting continuous hidden state spaces. These problems highlight that current methods cannot effectively verify recurrent policies in RL.

%% file: sections/method.tex
\section{RNN-Verification for RL}
To verify RNN-based RL policies, we must move beyond input-perturbation-based \textsc{RNN-Verification} (Def. \ref{def:rnn_ver_robustness}) and reason about uncertainty in the RNN’s internal memory, namely the hidden state that encodes the agent’s history. Specifically, instead of checking robustness against noisy inputs, we must verify actions over the set of feasible hidden states that could have arisen from past interactions. Hence, given an RNN-based policy $f$ and the current observation $o$, our goal is to determine whether there exists any hidden state $h$, within the set of feasible ones $\mathcal{H}^* \subseteq \mathcal{H}$ up to $o$, such that $f(h,o)$ leads to a violation of a desired behavior (e.g, an action that leads to an unsafe situation, such as colliding into an obstacle). We formally define this verification problem as:

 \begin{definition}[\textsc{RNN-Verification} for RL]\label{def:drl_rnn_safety}
 \phantom{a}

{\bf Input}: A tuple $\mathcal{T}=\langle f, \mathcal{H}, o \rangle$.

{\bf Output}: $\texttt{violation} \iff \exists h \in \mathcal{H}^* \;\vert\; f(h, o) \leq 0$.
 \end{definition}


As in robustness verification, we assume that the behavior of interest can be expressed using a single scalar output whose sign indicates whether a violation occurs (i.e., the output is strictly positive if and only if the undesired action is not selected). This assumption can be enforced by appending a post-processing layer that encodes the action selection mechanism together with the desired behavioral constraint.
In summary, Fig. \ref{fig:RL_safety_prop} illustrates our verification problem, which checks whether there exists a feasible hidden state $h \in \mathcal{H}^*$ such that, when combined with the current observation $o$, the RNN selects an action that violates a desired behavior. 

\begin{figure}[t]
    \centering
    \includegraphics[width=.9\linewidth]{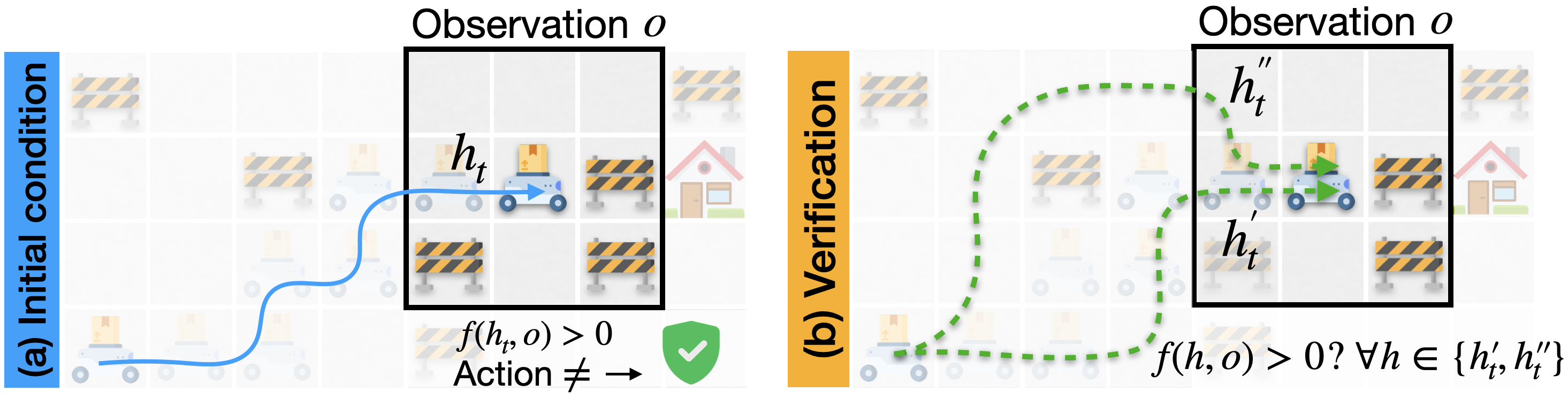}
    \caption{\textsc{RNN-Verification} for RL navigation: (a) The agent selects a safe action that avoids a collision. (b) The verification problem asks whether there exists a feasible history $h$ (e.g., $h_t', h_t''$) such that the agent selects an undesired action, i.e., $f(h,o)\leq 0$.}
    \vspace{-5pt}\label{fig:RL_safety_prop}
\end{figure}

In contrast to robustness, \textsc{RNN-Verification} for RL requires characterizing the set of feasible hidden states $\mathcal{H}^*$ that reach a certain configuration (state). To this end, formal verification methods typically use coarse geometric over-approximations, such as independent intervals or hyperrectangles. However, as previously stated, these relaxations may admit spurious hidden states (i.e., combinations of hidden neuron values that cannot arise from any valid interaction history of an RL agent). As a result, propagating such inflated sets through the transition function $f$ can yield spurious counterexamples, or false positives, that do not correspond to situations the agent can actually encounter. Sec. \ref{sec:rnn_prove} discusses how \texttt{RNN-ProVe} addresses this issue.

\paragraph{Quantifying violations.} Existing RNN verification tools typically return only a binary outcome and do not quantify the extent to which a trained RNN violates a desired requirement. In the context of RNN-based RL policies, this means that it is not possible to measure how much of the feasible hidden state space leads to violations of a desired behavior.

This limitation motivates our novel quantitative formulation, \textsc{\#RNN-Verification}. In analogy with the feedforward network case \citep{INVPROP, aamas23, TIST}, we invert the analysis perspective. Rather than characterizing all the inputs of the RNN that lead to violating the desired behavior, we characterize the set of feasible hidden states (histories) that violate the behavior for a certain situation at time $t$ (i.e., for a fixed input observation). This formulation enables explicit reasoning about uncertainty over feasible histories and yields a characterization of the region of hidden states that induce undesired behavior.

 \begin{definition}[\textsc{\#RNN-Verification} for RL]\label{def:sharp_rnn}
 
\phantom{a}

    {\bf Input}: A tuple 
    $\mathcal{T}=\langle f, \mathcal{H}, o \rangle$.
    
    {\bf Output}:  $Vol(\Gamma(\mathcal{T}))$.
    \vspace{1.5mm}
    
\noindent With $\Gamma(\mathcal{T}):= \{h \in \mathcal{H}^* \;\vert\; f(h, o) \leq 0\} \subseteq \mathcal{H}$ the set of feasible histories for a fixed $o$ that result in undesired behaviors.
 \end{definition}
 
 Broadly speaking, the objective of this quantitative formulation is to compute the volume of $\Gamma(\mathcal{T})$ relative to $\mathrm{Vol}(\mathcal{H})$, which corresponds to the probability that the current policy exhibits an undesired behavior. 
We also note that exactly characterizing the feasible-history space $\mathcal{H}^*$ is at least as hard as computing the volume of inputs that violates a behavior for a neural network, a problem known as \textsc{\#DNN-Verification} \citep{CountingProve}.

\begin{theorem}\label{thm:complexity}
    The \textsc{\#RNN-Verification} problem is \#P-hard.
\end{theorem}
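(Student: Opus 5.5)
We prove the statement by a polynomial-time reduction from \textsc{\#DNN-Verification}, which \citep{CountingProve} show to be \#P-hard. Recall that an instance of \textsc{\#DNN-Verification} consists of a feedforward ReLU network $N$, an input domain $\mathcal{X}$ (a hypercube, possibly with Boolean or discretized inputs as in the original hardness proof via \#3-SAT), and an output property. The task is to compute the volume (or count) of inputs $x \in \mathcal{X}$ with $N(x) \leq 0$. Our plan is to embed $N$ into an RNN policy so that, for a fixed observation $o$, the set $\Gamma(\mathcal{T})$ of Definition~\ref{def:sharp_rnn} coincides with the violating input set of $N$. Computing $Vol(\Gamma(\mathcal{T}))$ then solves the \textsc{\#DNN-Verification} instance.

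\textbf{Step 1: construct the RNN.} Given $N$ with input dimension $d$, we set the hidden state space $\mathcal{H} = \mathcal{X} \subseteq \mathbb{R}^d$ and fix an arbitrary observation $o$, for instance $o = 0$. We define the one-step map as $f(h,o) := N(h)$. Concretely, this is a recurrent cell whose input weights $W_x$ are zero and whose read-out is the ReLU network $N$ applied to $h$, followed by the margin layer. Since the paper permits arbitrary post-processing layers encoding the behavior, this is a legal RNN policy with scalar output, and its size is linear in the size of $N$.

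\textbf{Step 2: make the feasible set trivial.} We must ensure $\mathcal{H}^* = \mathcal{H}$, so that no hidden states are excluded. We take the admissible histories to be unconstrained. One option is a one-step history in which the previous observation is copied directly into the hidden state, via an identity recurrent update on a bounded domain with a clipping realized by ReLUs. Then every $h \in \mathcal{X}$ is reachable, and $\Gamma(\mathcal{T}) = \{h \in \mathcal{X} \mid N(h) \leq 0\}$. Alternatively, we can observe that the problem must be hard already in the special case $\mathcal{H}^* = \mathcal{H}$, since Definition~\ref{def:sharp_rnn} allows this instance.

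\textbf{Step 3: conclude.} Since $Vol(\Gamma(\mathcal{T}))$, or its ratio to $Vol(\mathcal{H})$, equals the \textsc{\#DNN-Verification} answer, and the construction is polynomial, any algorithm for \textsc{\#RNN-Verification} yields one for \textsc{\#DNN-Verification}. Hence \textsc{\#RNN-Verification} is \#P-hard.

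We expect the main obstacle to be Step 2, namely justifying rigorously that the feasible set $\mathcal{H}^*$ is the full domain, given that the paper defines $\mathcal{H}^*$ only informally as the histories reachable under the policy. We would first try the copy-observation construction, because it makes reachability explicit. We also need to check that volume versus counting matches the variant used by \citep{CountingProve}; in the discrete case we would restrict $\mathcal{H}$ to $\{0,1\}^d$ accordingly.
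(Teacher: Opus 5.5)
Your proposal is correct and is essentially the paper's argument: a horizon-one instance with the observation fixed, the hidden state playing the role of the network input, and \#P-hardness inherited from \textsc{\#DNN-Verification}. The paper silently writes $\Gamma(\mathcal{T})$ over $\mathcal{H}$ instead of $\mathcal{H}^*$, i.e., it takes your fallback $\mathcal{H}^*=\mathcal{H}$ without comment, so your explicit embedding $f(h,o):=N(h)$ and your attention to feasibility are, if anything, more careful than the paper's sketch; one small fix if you keep the copy-observation option is that recurrent weights are shared across time, so take $W_x=I$ rather than $W_x=0$ and let the fixed observation $o=0$ cancel the input term at the verified step.
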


In Appendix \ref{appendix:hardness} we report a complete argument of this theoretical result. Under standard complexity assumptions, it implies that for any non-trivial RNN-based RL policy, obtaining an exact characterization of $\mathrm{Vol}(\Gamma(\mathcal{T}))$ is computationally infeasible. This observation motivates our shift to a probabilistic perspective on verification, which we develop in Sec. \ref{sec:rnn_prove}.

\paragraph{Verifying Cooperative Multi-Agent RL.}
To the best of our knowledge, we now extend the previous problem formulations for the first time to fully cooperative MARL. We consider a setting in which a decentralized set of $\mathcal{N}$ agents must coordinate to achieve a shared objective.
In this setting, each agent $i \in \mathcal{N}$ acts according to its own RNN-based policy $f_i$, relying on its local observation $o_i$ and hidden state representation $h_i\in \mathcal{H}_i$. Considering the agents' common goal, we define verification of a cooperative MARL problem as the conjunction of individual desired-behavior constraints, meaning the system does not exhibit violations if every agent satisfies its individual desired behavior. Conversely, the system has violations if \textit{at least one} agent violates its behavior. We formally define the verification problem for this setting as follows:

\begin{definition}[\textsc{RNN-Verification} for MARL]\label{def:marl_rnn_safety}
\phantom{a}

{\bf Input}: $\mathbf{T} = \{ \mathcal{T}_1, \ldots, \mathcal{T}_\mathcal{N} \}$, where $\mathcal{T}_i = \langle f_i, \mathcal{H}_i, o_i \rangle$.

{\bf Output}: $\texttt{violation} \iff \exists i \in \{1, \ldots, \mathcal{N}\}, \newline \text{\hspace{4.6cm}} \exists h_i \in \mathcal{H}_i^* \;\vert\; f_i(h_i, o_i) \leq 0$.
\end{definition}

We thus extend the quantitative formulation of Def. \ref{def:sharp_rnn} to the multi-agent domain. Due to our interest in fully cooperative MARL, we are interested in quantifying the agent with the highest probability of violations given its current local history. Therefore, the \textsc{\#RNN-Verification} problem for MARL aims to compute the maximum volume of undesired feasible histories across all agents.

\begin{definition}[\textsc{\#RNN-Verification} problem for MARL]\label{def:sharp_rnn_marl}
\phantom{a}

    {\bf Input}: $\mathbf{T} = \{ \mathcal{T}_1, \ldots, \mathcal{T}_\mathcal{N} \}$, where $\mathcal{T}_i = \langle f_i, \mathcal{H}_i, o_i \rangle$.
    
    {\bf Output}: $\max_{i \in  \mathcal{N}} Vol(\Gamma(\mathcal{T}_i))$.
    \vspace{1.5mm}
    
\end{definition}

Since we adopt the standard decentralized setting in which each agent $i$ acts based solely on its local information, the evaluation of $Vol(\Gamma(\mathcal{T}_i))$ for each agent is independent. The verification certificate is therefore determined by the worst-performing agent. This cooperative MARL formulation preserves tractability by decomposing into independent verification tasks, enabling parallel analysis without introducing additional conservatism.
Accordingly, in the remainder of the paper, we present the proposed probabilistic framework and its theoretical guarantees for the single-agent case.

\subsection{RNN-ProVe: a Novel Probabilistic Approach}\label{sec:rnn_prove}

Following the result in Theorem \ref{thm:complexity}, \texttt{RNN-ProVe} solves a relaxed probabilistic version of the \textsc{\#RNN-Verification} problem for RL. Intuitively, the following formalization asks for a scalar volume estimate that accurately approximates the volume measure of the true undesired set $\Gamma(\mathcal{T})$. 

\begin{definition}[Approximate \textsc{\#RNN-Verification} for RL]
\label{def:approx_rnn_ver}
\phantom{a}
    {\bf Input}: A tuple $\mathcal{T}=\langle f, \mathcal{H}, o \rangle$, an error tolerance $\epsilon \in (0,1)$, and a confidence parameter $\delta \in (0,1)$.
    
    {\bf Output}: $\tilde{V} \in [0, Vol(\mathcal{H})]$ estimate of $Vol(\Gamma(\mathcal{T}))$, s.t.
    \[
    \Pr\left(\left| \frac{\tilde{V} - Vol(\Gamma(\mathcal{T}))}{Vol(\mathcal{H})} \right| \leq \epsilon \right) \geq 1-\delta.
    \]
\end{definition}

This probabilistic problem definition requires that, with high confidence $1-\delta$, the deviation between the estimated $\tilde{V}$ and true volumes $Vol(\Gamma(\mathcal{T}))$, normalized by the total volume of the hidden space $Vol(\mathcal{H})$, remains within the specified error bound $\epsilon$. Since computing $Vol(\Gamma(\mathcal{T}))$ exactly is computationally prohibitive, an alternative strategy is to perform a Monte Carlo estimation as
$Vol(\Gamma(\mathcal{T})) \approx Vol(\mathcal{H})\cdot \frac{1}{k} \sum_{i=1}^k \mathds{1}_{f(h_i, o) \leq 0},$ where $h_1, \ldots, h_k$ are independent samples drawn from the hidden state domain $\mathcal{H}$, and $\mathds{1}_{f(h_i,o) \leq 0}$ is the indicator function for undesired behaviors. However, this naive sampling strategy implicitly assumes that all histories within the domain $\mathcal{H}$ are feasible, while the set of feasible histories $\mathcal{H}^*$ is a complex, lower-dimensional manifold within $\mathcal{H}$. Consequently, a uniform sample from $\mathcal{H}$ inevitably includes unfeasible histories, leading to inaccurate verification results (as we will show in our experiments).

To address this issue, one would require an oracle that determines whether, for a given configuration (i.e., a state $s \in \mathcal{S}$), a history $h$ is feasible, meaning that it can be realized as an internal hidden state by $\pi$. Since no exact analytical form of this oracle exists, \texttt{RNN-ProVe} leverages the universal approximation theorem \citep{hornik1989multilayer} to learn an approximation of this oracle, as illustrated in Fig. \ref{fig:rnn_prove_overview}. 
\begin{figure}[b]
    \centering
    \includegraphics[width=0.85\linewidth]{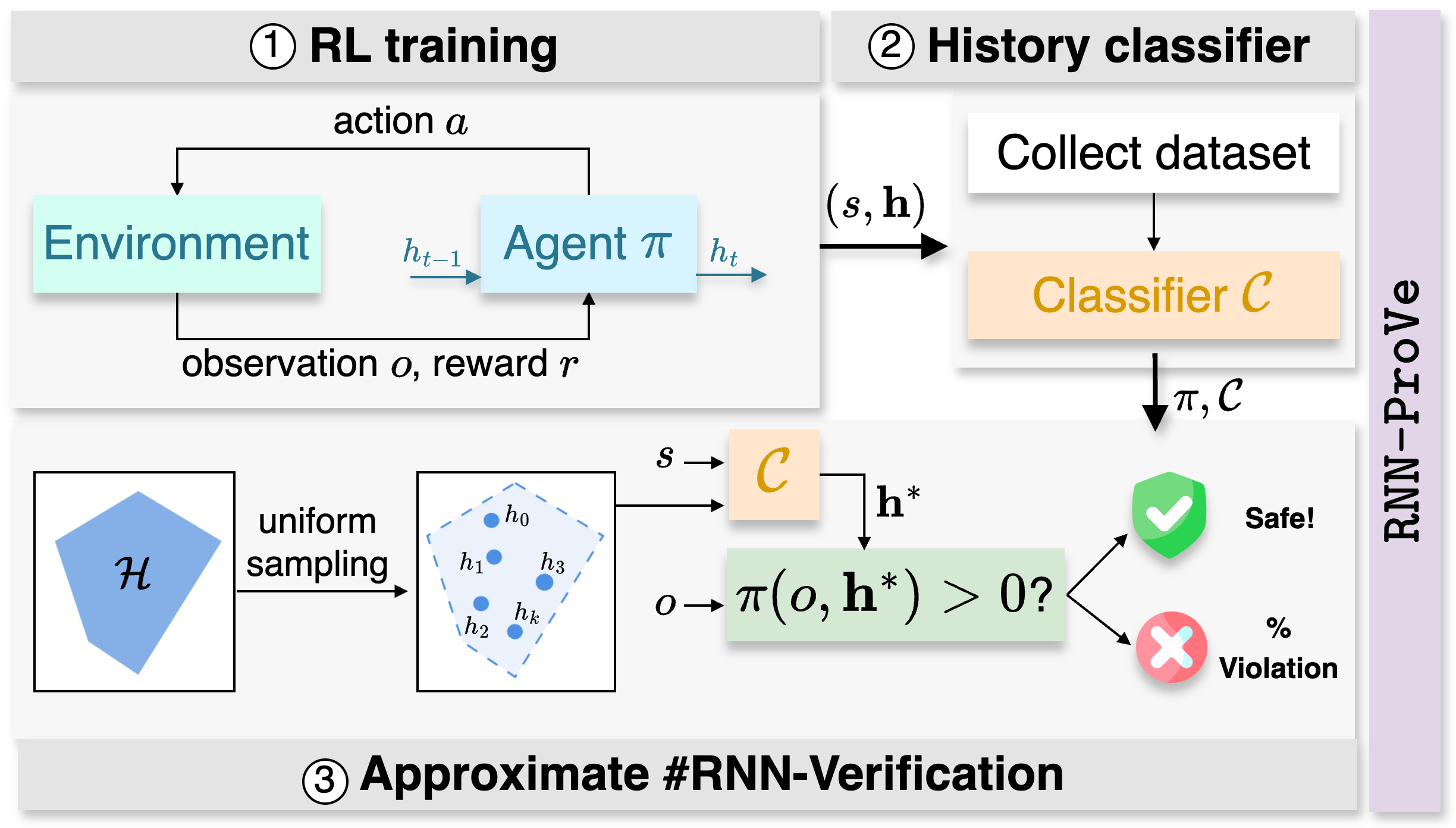}
    \caption{\texttt{RNN-ProVe} overview.}
    \label{fig:rnn_prove_overview}
    \vspace{-3mm}
\end{figure}
In more detail, during RL training, the agent's policy explores a wide range of environment states $s$ and generates corresponding internal hidden state representations of the actual histories leading to $s$. \texttt{RNN-ProVe} records these state-hidden state pairs to construct a dataset used to train a \emph{feasibility history classifier} $\mathcal{C}: \mathcal{S} \times \mathcal{H} \rightarrow \{0,1\}$. Specifically, given a state $s$ and a hidden state $h$, the classifier predicts whether $h$ corresponds to a feasible hidden state for the trained policy (phases 1 and 2 in the upper part of Fig. \ref{fig:rnn_prove_overview}). This data collection process leverages maximum entropy-based RL exploration mechanisms, which, in our experiments, lead to a sufficient coverage of the hidden state space for our dataset \citep{RL}. As a result, the dataset captures a wide variety of the feasible hidden state space, including both optimal and suboptimal but realizable interactions, which is essential for training a reliable feasibility classifier.
Once the classifier $\mathcal{C}$ is trained, \texttt{RNN-ProVe} enters the quantitative verification phase (phase 3 in Fig. \ref{fig:rnn_prove_overview}). For a given state of interest $s$ where we want to verify a certain behavior, the method samples $n$ candidate history vectors from the hidden state domain $\mathcal{H}$ and evaluates them using $\mathcal{C}$. The classifier acts as a feasibility oracle, retaining only those representations $\mathbf{h}^*$ that are predicted to be feasible hidden states of the RNN-based policy leading to state $s$. These validated histories are then used to evaluate the policy under the observation $o$ associated with $s$. Hence, for each $h \in \mathbf{h}^*$, the method checks whether the action selected by the trained policy satisfies the desired behavior (i.e., selects a certain action), that is, whether $f(h,o) > 0$. If violations are observed, \texttt{RNN-ProVe} estimates the undesired volume $\tilde{V}$ as the ratio of violating hidden states to the total number of validated hidden states tested.

The reliability of the resulting certificate depends on two sources of uncertainty: (i) the accuracy of the feasibility classifier, which determines whether the identified set faithfully represents the true feasible hidden state manifold without admitting spurious histories; and (ii) sampling variance, which depends on the number of validated hidden states used to estimate the undesired volume. \texttt{RNN-ProVe} addresses both sources of uncertainty in the next section through rigorous statistical analysis, providing high-confidence bounds and bounded-error certificates for feasible hidden state identification and the resulting verification results.

\subsection{Theoretical Guarantees}
\label{sec:theory}
\texttt{RNN-ProVe} guarantees rely on decomposing the estimation error $\epsilon$ into two components: (i) the approximation error of the histories classifier; and (ii) the statistical variance of the safety evaluation (i.e., the sampling error). We bound both terms using Hoeffding's inequality \citep{hoeffding1963probability} to derive a global probabilistic certificate. 

\begin{lemma}[Hoeffding's inequality {\citep{hoeffding1963probability}}]\label{lemma:hoeffding}
Let $X_1,\dots,X_n$ be independent random variables such that $X_i \in [0,1]$, and let $\mu = \mathbb{E}[X_i]$ and its estimate $\hat{\mu} = \frac{1}{n}\sum_{i=1}^n X_i.$
Then, for any $\epsilon > 0$, $\Pr\!\left( |\hat{\mu} - \mu| \ge \epsilon \right)
\le
2\exp\!\left( -2n\epsilon^2 \right)$.
\end{lemma}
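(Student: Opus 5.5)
Hoeffding's inequality is a classical result cited from \citep{hoeffding1963probability}, so the plan is to reproduce the standard Chernoff-bounding argument rather than derive anything new from the paper. Write $S_n = \sum_{i=1}^n (X_i - \mathbb{E}[X_i])$, so that $\hat{\mu} - \mu = S_n/n$. The two-sided event splits by a union bound into the upper tail $\{\hat{\mu}-\mu \ge \epsilon\}$ and the lower tail $\{\hat{\mu}-\mu \le -\epsilon\}$. Each tail will be bounded by $\exp(-2n\epsilon^2)$, which gives the factor $2$.

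For the upper tail, fix $\lambda>0$ and apply Markov's inequality to $e^{\lambda S_n}$:
\[
\Pr(S_n \ge n\epsilon) \le e^{-\lambda n \epsilon}\,\mathbb{E}\!\left[e^{\lambda S_n}\right] = e^{-\lambda n\epsilon}\prod_{i=1}^n \mathbb{E}\!\left[e^{\lambda (X_i - \mathbb{E}[X_i])}\right].
\]
The factorization uses the independence of the $X_i$. The lower tail follows by the same argument applied to $1-X_i$, which again lies in $[0,1]$. Note that the statement writes a common mean $\mu=\mathbb{E}[X_i]$; the argument also works with $\mu$ taken as the average of the means.

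The key step, and the main technical obstacle, is \emph{Hoeffding's lemma}: for a random variable $Y\in[a,b]$ with $\mathbb{E}[Y]=0$,
\[
\mathbb{E}[e^{\lambda Y}] \le \exp\!\left(\lambda^2 (b-a)^2/8\right).
\]
I would prove it in three moves.
\begin{enumerate}
\item By convexity of $y\mapsto e^{\lambda y}$ on $[a,b]$, $e^{\lambda y} \le \frac{b-y}{b-a}e^{\lambda a} + \frac{y-a}{b-a}e^{\lambda b}$. Taking expectations with $\mathbb{E}[Y]=0$ gives $\mathbb{E}[e^{\lambda Y}] \le e^{\varphi(u)}$, where $u=\lambda(b-a)$, $p=-a/(b-a)$, and $\varphi(u) = -pu + \log(1-p+pe^{u})$.
\item Check that $\varphi(0)=0$ and $\varphi'(0)=0$.
\item Show $\varphi''(u) = q(1-q)\le 1/4$, where $q = pe^u/(1-p+pe^u)$. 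Taylor's theorem then gives $\varphi(u)\le u^2/8$.
\end{enumerate}
Here $b-a = 1$, so each factor is at most $e^{\lambda^2/8}$.

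Combining these gives $\Pr(S_n\ge n\epsilon)\le \exp(-\lambda n\epsilon + n\lambda^2/8)$. Minimizing over $\lambda$ yields $\lambda = 4\epsilon$ and the bound $\exp(-2n\epsilon^2)$. Adding the symmetric lower-tail bound gives $\Pr(|\hat{\mu}-\mu|\ge\epsilon)\le 2\exp(-2n\epsilon^2)$, as stated in Lemma~\ref{lemma:hoeffding}.
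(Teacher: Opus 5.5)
Your proof is correct. It is the standard Chernoff-bound argument combined with Hoeffding's lemma: the convexity bound, $\varphi(0)=\varphi'(0)=0$, $\varphi''\le 1/4$, the choice $\lambda=4\epsilon$, and the $1-X_i$ trick for the lower tail. This is precisely the proof in Hoeffding's 1963 paper, which the paper cites for this lemma without reproving it.
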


In our setting, let, without loss of generality, $\mathcal{H}\subseteq [0,1]^n$ be the hidden state domain and $\epsilon, \delta \in (0,1)$ be the total error tolerance and confidence parameters required by Def. \ref{def:approx_rnn_ver}. We distribute these budgets into classifier validation components $(\hat{e}, \epsilon_{clf}, \delta_{clf})$ and verification components $(\epsilon_{ver}, \delta_{ver})$, such that $\epsilon = \hat{e} + \epsilon_{clf} + \epsilon_{ver}$ and $\delta = \delta_{clf} + \delta_{ver}$.

\begin{theorem}[\texttt{RNN-ProVe} Probabilistic Guarantees]
  \label{thm:pac_guarantee}
  Let $\hat{p}$ be the empirical probability of undesired behavior computed over $N$ histories validated via the classifier $\mathcal{C}$. Let $\hat{e}$ be the empirical error rate of $\mathcal{C}$ measured on a held-out validation set of size $M$. If the sample sizes satisfy $N \geq \frac{\ln(2/\delta_{ver})}{2\epsilon_{ver}^2}$ and $M \geq \frac{\ln(2/\delta_{clf})}{2\epsilon_{clf}^2}$,
  then the volume estimate $\tilde{V} = Vol(\mathcal{H}) \cdot \hat{p}$ computed by \texttt{RNN-ProVe} satisfies $\Pr\left(\left| \frac{\tilde{V} - Vol(\Gamma(\mathcal{T}))}{Vol(\mathcal{H})} \right| \leq \epsilon \right) \geq 1-\delta$.

\end{theorem}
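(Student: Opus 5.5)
The plan is to split the normalized error by the triangle inequality. Write $p^* = Vol(\Gamma(\mathcal{T}))/Vol(\mathcal{H})$ for the true normalized undesired volume. Let $p_{\mathcal{C}}$ be the probability that a hidden state drawn from the sampling domain and accepted by $\mathcal{C}$ as feasible for $s$ satisfies $f(h,o)\le 0$. This is the population quantity that $\hat{p}$ estimates. Since $\tilde{V}/Vol(\mathcal{H}) = \hat{p}$, we have
\[
\left|\hat{p} - p^*\right| \le \left|\hat{p} - p_{\mathcal{C}}\right| + \left|p_{\mathcal{C}} - p^*\right| .
\]
The first term is the sampling error. The second is the bias introduced by using the learned oracle $\mathcal{C}$ in place of the exact feasibility oracle. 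I will bound each with high probability and combine them with a union bound.

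\emph{Sampling term.} The $N$ validated histories are i.i.d. draws conditioned on acceptance by $\mathcal{C}$. The indicators $X_j = \mathds{1}_{f(h_j,o)\le 0}$ are therefore independent, take values in $[0,1]$, and have mean $p_{\mathcal{C}}$. Lemma~\ref{lemma:hoeffding} gives $\Pr(|\hat{p}-p_{\mathcal{C}}|\ge \epsilon_{ver}) \le 2\exp(-2N\epsilon_{ver}^2)$. The hypothesis $N \ge \ln(2/\delta_{ver})/(2\epsilon_{ver}^2)$ makes the right-hand side at most $\delta_{ver}$.

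\emph{Classifier term.} Let $e$ be the true misclassification rate of $\mathcal{C}$. Its errors on the $M$ held-out points are i.i.d. Bernoulli$(e)$ variables. Lemma~\ref{lemma:hoeffding} together with the bound on $M$ gives $|\hat{e}-e|\le\epsilon_{clf}$, hence $e \le \hat{e}+\epsilon_{clf}$, with probability at least $1-\delta_{clf}$. The remaining step is to show $|p_{\mathcal{C}} - p^*| \le e$. To do this, I will couple the two measures. The validated set differs from the true feasible set $\mathcal{H}^*$ only on samples that $\mathcal{C}$ misclassifies: false positives bring in infeasible histories, and false negatives drop feasible ones. The indicator $\mathds{1}_{f(h,o)\le 0}$ is bounded in $[0,1]$, so changing the underlying distribution by total variation mass at most $e$ changes its mean by at most $e$. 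That yields $|p_{\mathcal{C}}-p^*|\le e \le \hat{e}+\epsilon_{clf}$.

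\emph{Combination.} By a union bound, both events hold simultaneously with probability at least $1-\delta_{clf}-\delta_{ver} = 1-\delta$. On that event, $|\hat{p}-p^*| \le \epsilon_{ver} + \hat{e} + \epsilon_{clf} = \epsilon$, which is the claimed certificate. The main obstacle is the coupling step $|p_{\mathcal{C}} - p^*|\le e$. It requires that the validation distribution on which $e$ is measured coincide with the distribution of candidate samples drawn in phase 3. It also requires stating $p^*$ as a probability under the matching reference measure on $\mathcal{H}^*$, rather than as a ratio of raw Lebesgue volumes, since the feasible set may be lower-dimensional. I would make these identifications explicit as standing assumptions and then bound the bias through the symmetric difference of the accepted and truly feasible sets. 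I would also normalize by the acceptance probability, and check that normalization does not inflate the bound beyond $e$; if it does, I would instead define $e$ as the conditional error rate among accepted samples.
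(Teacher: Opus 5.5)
Your skeleton matches the paper's proof: the same split $|\hat p-p^*|\le|\hat p-p_{\mathcal C}|+|p_{\mathcal C}-p^*|$, Hoeffding on each term, and a union bound with $\epsilon=\hat e+\epsilon_{clf}+\epsilon_{ver}$. The paper closes the classifier term by asserting $|p_{\mathcal C}-p^*|\le\epsilon_{clf}$ straight from the validation-set Hoeffding bound, but that bound only controls $|\hat e-e|$. Your bridge $|p_{\mathcal C}-p^*|\le e\le\hat e+\epsilon_{clf}$ is what that step actually requires, so you have found the right weak point.

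The gap is that the bridge does not hold as you set it up. Let $\mu$ be the (uniform) phase-3 sampling law, $A=\{h:\mathcal C(s,h)=1\}$ and $B=\{h:f(h,o)\le 0\}$.
\begin{itemize}
\item Your sampling step needs the conditional quantity $p_{\mathcal C}=\mu(B\mid A)$. But $p^*=\mu(\mathcal H^*\cap B)$ is unconditional. So even a perfect classifier ($A=\mathcal H^*$, $e=0$) leaves $p_{\mathcal C}=p^*/\mu(\mathcal H^*)$.
\item Suppose you condition $p^*$ on $\mathcal H^*$ to match; this already changes the quantity the theorem certifies. The worst-case bias over $B$ is then the total variation distance between $\mu(\cdot\mid A)$ and $\mu(\cdot\mid\mathcal H^*)$. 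That distance equals $\max\{\mu(A\setminus\mathcal H^*)/\mu(A),\ \mu(\mathcal H^*\setminus A)/\mu(\mathcal H^*)\}$. It is at most $e/\max\{\mu(A),\mu(\mathcal H^*)\}$, where $e=\mu(A\triangle\mathcal H^*)$, with equality when all errors are of one type.
\item So normalization does inflate the bound, by up to the inverse acceptance rate. In the $4\times 4$ navigation run, $12637$ of $10^5$ candidates are accepted, so the factor is about $8$.
\item Your fallback, the error rate among accepted samples, controls only the first term of that maximum. Take $A\subsetneq\mathcal H^*$ with $\mathcal C$ rejecting exactly the violating feasible histories. The error among accepted samples is $0$ and $p_{\mathcal C}=0$, yet the conditional target is $\mu(\mathcal H^*\setminus A)/\mu(\mathcal H^*)$.
\end{itemize}

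The unconditional route is what proves the stated budget. Keep $p^*=\mu(\mathcal H^*\cap B)$ and set $p_{\mathcal C}:=\mu(A\cap B)$. Take $\hat p$ to be the fraction of all $n$ raw candidates that are both accepted and violating. Equivalently, this is the paper's ratio multiplied by the empirical acceptance rate $N/n$, i.e., its naive Monte Carlo estimator with the indicator restricted to accepted samples. Then $|p_{\mathcal C}-p^*|\le\mu(A\triangle\mathcal H^*)=e$ follows exactly from your symmetric-difference argument, with no normalization. Since $n\ge N$, the hypothesis on $N$ already gives Hoeffding over the $n$ candidates, provided $n$ is fixed in advance.

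Two caveats remain on this route. You still need your standing assumption that $\hat e$ is measured on points drawn from $\mu$ and labeled by true feasibility; a held-out split of training data does not provide this. Also, if $\mathcal H^*$ is Lebesgue-null, this $p^*$ is $0$ and the certificate is vacuous; that is a defect of the statement, not of your argument.

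If you instead keep the ratio estimator the paper describes, you have two options. You can divide the classifier term by a high-confidence lower bound on $\mu(A)$, which is another Hoeffding event charged to $\delta$. Or you can validate the false-discovery and miss rates separately. Either way, the statement's accounting (what $\hat e$ measures, or how $\delta$ is split) has to change.
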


\begin{proof}
    Let $p^*$ denote the true probability of undesired behavior (i.e., the $Vol(\Gamma(\mathcal{T}))/Vol(\mathcal{H})$) and $p_{\mathcal{C}}$ denote the probability induced by the classifier $\mathcal{C}$. We bound the total deviation $|\hat{p} - p^*|$ by applying the triangle inequality $|\hat{p} - p^*| \le |\hat{p} - p_{\mathcal{C}}| + |p_{\mathcal{C}} - p^*|$, representing the verification sampling and classifier error. To bound the classification error, we note that by applying Lemma \ref{lemma:hoeffding}, and enforcing that $2e^{-2M\epsilon_{clf}^2} \leq \delta_{clf}$, we obtain a validation set of size $M \ge \frac{\ln(2/\delta_{clf})}{2\epsilon_{clf}^2}$. By using this $M$ we guarantee, with probability at least $1-\delta_{clf}$, that $|p_{\mathcal C} - p^*| \leq \epsilon_{clf} \leq \epsilon_{clf}+\hat{e}$. Analogously, $\hat p$ is a Monte Carlo estimate of $p_{\mathcal C}$ obtained from $N$ independent samples. By applying once again Lemma \ref{lemma:hoeffding}, if $N \geq \frac{\ln(2/\delta_{ver})}{2\epsilon_{ver}^2}$ we have that $|\hat{p} - p_{\mathcal C}| \le \epsilon_{ver}$, with probability at least $1-\delta_{ver}$. By union bound, these inequalities hold simultaneously with probability at least $1-(\delta_{clf}+\delta_{ver}) = 1-\delta$, giving $|\hat{p}-p^*| \leq \epsilon_{ver}+\hat{e} + \epsilon_{clf} = \epsilon$. 
    Finally, substituting $\hat{p} = \frac{\tilde{V}}{\text{Vol}(\mathcal{H})}$ and $p^* = \frac{Vol(\Gamma(\mathcal{T}))}{Vol(\mathcal{H})}$, we obtain $\Pr \left( \left| \frac{\tilde{V}}{\text{Vol}(\mathcal{H})} - \frac{\text{Vol}(\Gamma(\mathcal{T}))}{\text{Vol}(\mathcal{H})} \right| \leq \epsilon \right) \geq 1-\delta$, concluding the argument.
\end{proof}

This result shows that \texttt{RNN-ProVe} provides a bounded-error, high-confidence certificate for the volume of hidden states violating the desired behavior, providing a solution to the approximate \textsc{\#RNN-Verification} for RL problem. 

For example, consider a target confidence of $1-\delta = 99.9\%$ and an $\epsilon = 5\%$ error in the certified violation volume.
Suppose the classifier $\mathcal{C}$ achieves an empirical
error rate of $\hat e = 0.01$ (i.e., $99\%$ accuracy). We allocate the remaining error budget evenly between
classifier validation and Monte Carlo estimation by setting $\epsilon_{clf} = 0.02$ and $\epsilon_{ver} = \epsilon - (\hat e + \epsilon_{clf}) = 0.02$,
with confidence parameters $\delta_{clf} = \delta_{ver} = \delta/2 = 5 \cdot 10^{-4}$. By Theorem~\ref{thm:pac_guarantee}, the required validation set size for the classifier and the sample histories for the verification are $M, N \ge \frac{\ln(2/\delta)}{2\epsilon^2} = \frac{\ln(2000)}{2 \cdot 0.02^2}
\approx 10362$. Hence, \texttt{RNN-ProVe} certifies the violation volume with $99.9\%$ confidence and a maximum error margin of $5\%$ using $\approx2\cdot10362$ total queries. Crucially, parallel execution on a GPU makes this computation take negligible time as shown in our experiments.

\paragraph{Limitations.}
\texttt{RNN-ProVe} focuses on discrete state, action, and observation spaces, which is consistent with most existing RNN verification methods discussed in Sec. \ref{sec:preliminaries}. While the underlying theory is not inherently restricted to discrete domains, extending \texttt{RNN-ProVe} to continuous settings requires identifying feasible hidden states when state and observation spaces are continuous. A natural extension, following standard practice in verification, is to discretize or quantize these spaces to obtain a finite abstraction over which feasibility can be learned and verified \citep{LiuSurvey,eProve,RFProve}. An alternative is to approximate feasibility via conservative reachable-set approximations of the hidden-state dynamics. Exploring such extensions is left for future work.
\texttt{RNN-ProVe} may also be less effective for very small problem instances or when the rate of undesired behavior is extremely low. In these cases, the number of samples required to achieve high statistical confidence may approach or exceed the number of feasible histories, making direct enumeration more appropriate. More generally, since the sample complexity scales as $O(\epsilon^{-2})$, estimating extremely rare violations (where $p^*\to 0$) requires very small error tolerances, leading to large sample requirements. Nonetheless, this behavior reflects the inherent complexity of the underlying counting problem rather than a limitation of the proposed method.

%% file: sections/experiments.tex
\section{Experiments}
\label{sec:experiments}

Our experiments address the following points:
(i) \textit{Can \texttt{RNN-ProVe} verify recurrent RL policies in both partially observable single- and cooperative multi-agent settings?}
(ii) \textit{Does \texttt{RNN-ProVe} scale with increasing problem complexity, in regimes where existing linearization-based RNN verification methods become computationally intractable?}
(iii) \textit{How does \texttt{RNN-ProVe} compare to naive sampling baselines, and how does incorporating \texttt{RNN-ProVe} dataset affect the linearization-based baseline?}

\paragraph{Implementation Details.} To answer these questions, we evaluate \texttt{RNN-ProVe} on two classes of discrete environments. The first consists of single-agent navigation tasks of increasing size, with policies trained using the widely adopted DRQN algorithm \citep{drqn}. The second consists of cooperative multi-agent policies trained on the BoxPushing (BP) domain \citep{yuchen_ma}, with increasing environment sizes, using the CTDE DRQN-based approach of \cite{yuchen_ma}. Both environments are illustrated in Fig. \ref{fig:tasks} and described in detail in Appendix \ref{appendix:task_description}. 


Since existing RNN verification tools are not designed to identify the feasible history space $\mathcal{H}^*$ and would therefore require an external oracle for this purpose, we adapt the linearization-based RNN verification approach of \cite{ko2019popqorn} as a baseline for comparison with \texttt{RNN-ProVe}. In particular, we use the same feasibility oracle as in \texttt{RNN-ProVe} and perform an exact quantitative volume estimation. Employing the same feasibility oracle ensures a fair and controlled comparison. Our experiments thus aim to highlight the scalability limitations of exact solvers and the practical advantages of \texttt{RNN-ProVe} in challenging settings. Notably, both \texttt{RNN-ProVe} and the baseline are agnostic to the specific learning algorithm used to train the policies, as is standard in verification. In all experiments, the recurrent component of the policy is implemented using a GRU layer, selected for its strong empirical performance.
Experiments are conducted on Xeon E5-2650 CPU nodes with $64$\,GB of RAM. We verify $10$ independently trained policies per task, collected at convergence. The following curves show the average return over the independent runs, smoothed over $100$ episodes, with shaded regions indicating $95\%$ confidence intervals. We also trained three independent feasibility classifiers with different random initializations; as the resulting verification estimates exhibited negligible variation, we report only the averaged results in Tab.~\ref{tab:tab_eval}. Complete RL and classifier hyperparameters are selected via grid search and reported in Appendix \ref{appendix:hyperparams}. The environmental impact of the experiments is discussed in Appendix \ref{appendix:env_impact}.

\paragraph{Desired Behaviors.}
For single-agent navigation, we verify collision-avoidance behaviors (Def. \ref{def:sharp_rnn}), where the agent should not select an action that leads to a collision with an obstacle, as in \cite{CROP, eps_retrain, ral}. In the cooperative BP, we verify a joint behavior (Def. \ref{def:sharp_rnn_marl}), requiring that both agents pick the push action when positioned in front of the box. These are designed to assess both individual and cooperative violations. At a high-level, the resulting violation percentages can be interpreted as the likelihood that an agent exhibits undesired behavior when reaching a given state from arbitrary feasible histories, providing an indicator of policy generalization and robustness to history-dependent uncertainty, such as that induced by noisy or non-deterministic interactions.

\input{tables/verification_results}

\begin{figure}[t]
    \centering
    \includegraphics[width=0.75\linewidth]{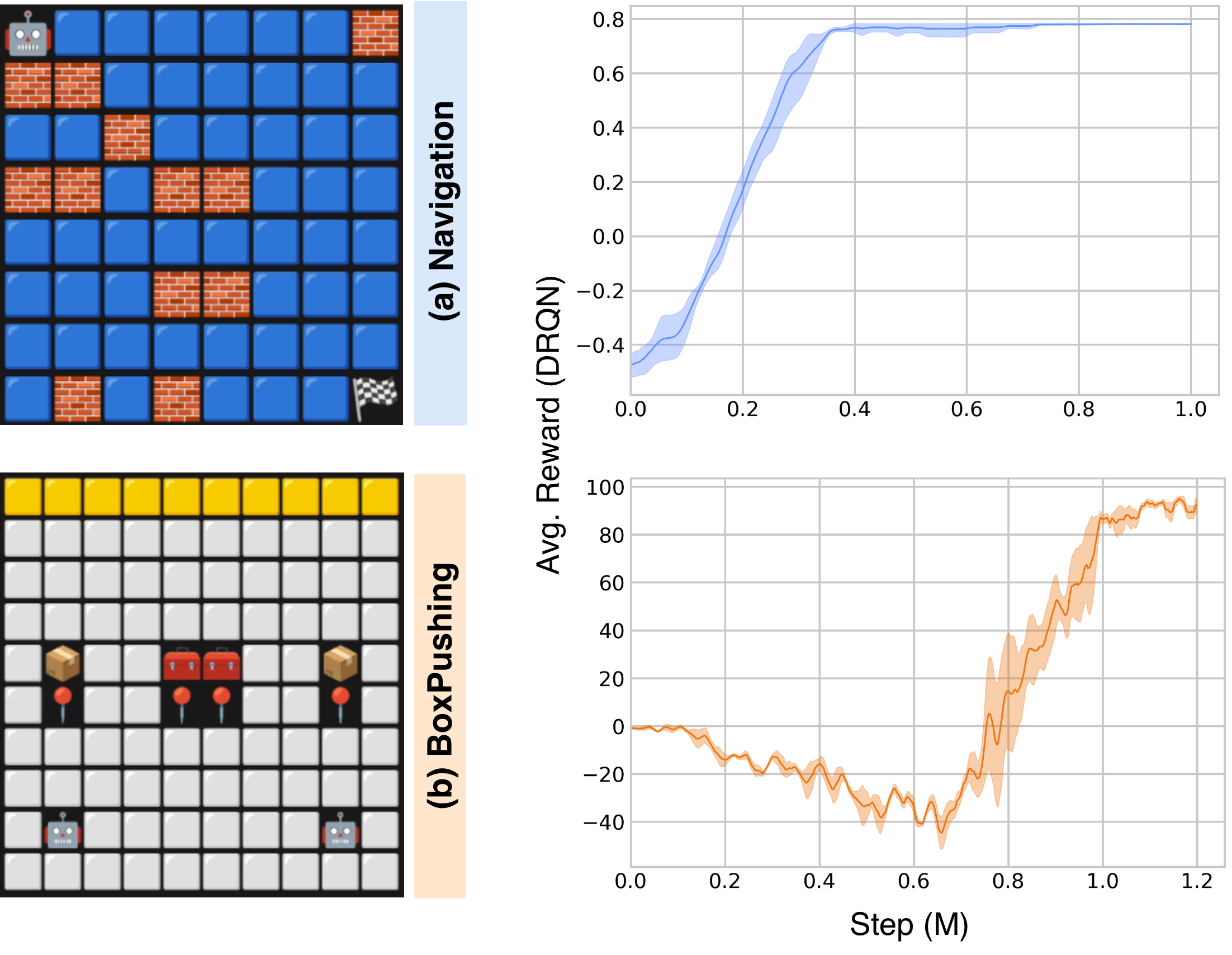}
    \caption{(Top) Single-agent navigation with red obstacles, where the agent must navigate to the goal location (left) and the corresponding training performance of DRQN (right). (Bottom) Cooperative multi-agent BP (left), where agents must jointly push a box to a yellow goal location, and the corresponding joint training performance of the agents using a CTDE DRQN-based algorithm (right).}
    \vspace{-7pt}
    \label{fig:tasks}
\end{figure}

\subsection{Empirical Evaluation}

\paragraph{Verification of RNN-based policies.}

\texttt{RNN-ProVe} verifies recurrent policies in all single-agent navigation and cooperative multi-agent BP. In navigation tasks, \texttt{RNN-ProVe} identifies non-trivial violation rates in smaller environments, with a mean violation of approximately $1\%$ in the $4\times4$ grid and approximately $13\%$ in the $8\times8$ grid (Table \ref{tab:tab_eval}). In contrast, for the $16\times16$ environment, \texttt{RNN-ProVe} reports low violation rates below $1\%$ under the specified $(\epsilon,\delta)$ guarantees. 

Fig. \ref{fig:heatmap} also visualizes the spatial distribution of violations for the $8\times8$ environment. Despite strong training performance, the learned policies do not generalize uniformly across feasible hidden states. Specifically, the reported violation values correspond to the fraction of feasible histories from which the agent selects an action that eventually leads to a collision, revealing sensitivity to particular configurations near obstacles. These failures are not captured by standard empirical evaluations based on average return \citep{marzari2026varepsilon}.
In the cooperative BP task, \texttt{RNN-ProVe} certifies mean violation values of approximately $1.18\%$ in the $10\times10$ environment and approximately $1.7\%$ in the $20\times20$ environment, demonstrating that the framework naturally extends to cooperative settings with interacting recurrent agents while providing quantitative, probabilistic safety assessments.

\begin{figure}[b!]
    \centering
    \includegraphics[width=.45\linewidth]{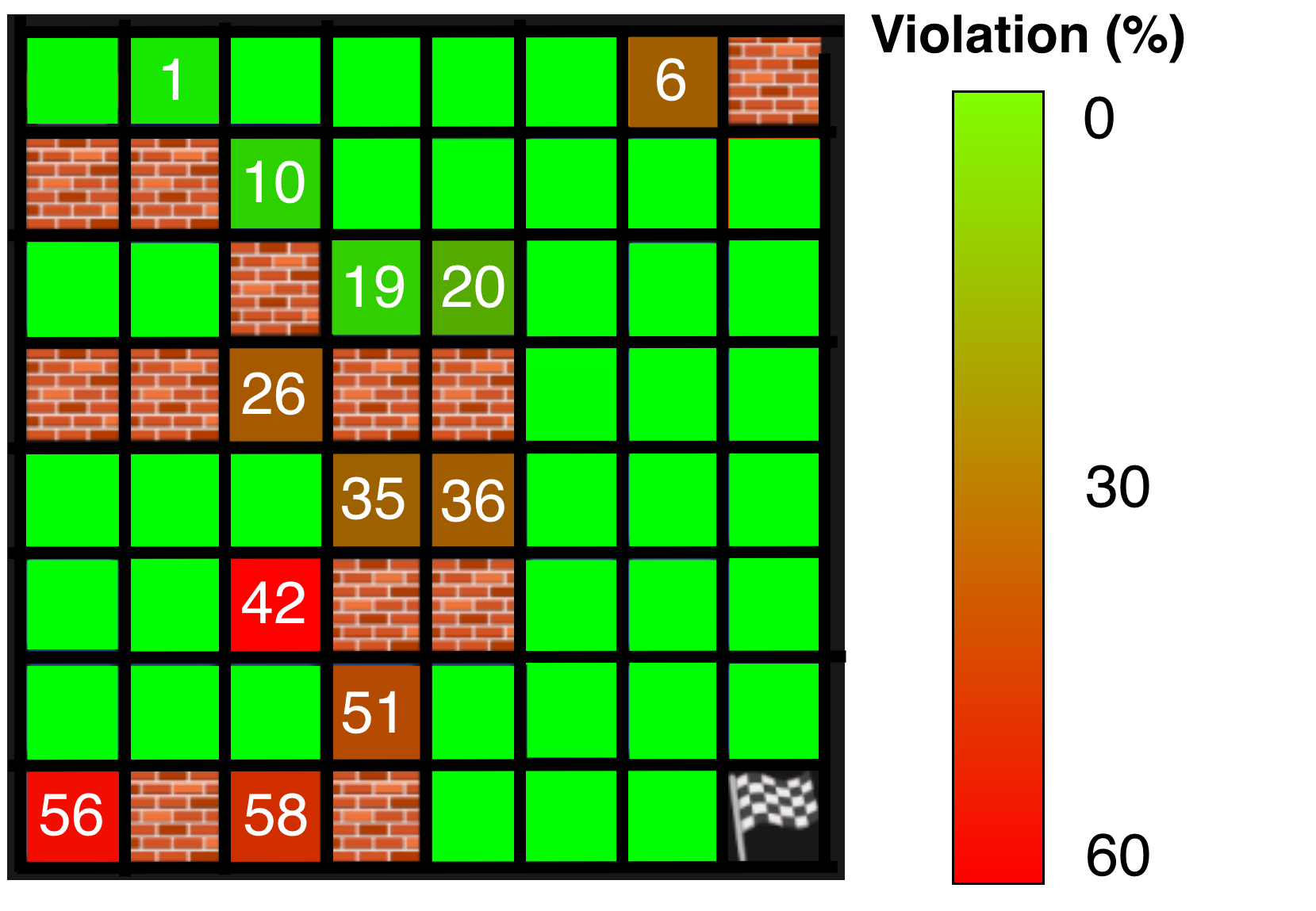}
    \caption{Violation heatmap for the $8\times8$ navigation task. Each grid cell (state) is colored according to the \% of violations computed via \texttt{RNN-ProVe}. Red cells indicate higher violations, revealing regions of the environment where the learned policy is more sensitive.} 
    \label{fig:heatmap}
\end{figure}

\paragraph{Scalability Consideration.}
Results in Tab. \ref{tab:tab_eval} highlight the scalability limitations of exact linearization-based RNN verification when applied to recurrent RL policies. In the navigation task, the baseline requires several minutes of computation to verify the small $4\times4$ environment, and incurs a substantial increase in runtime, exceeding $18$ minutes, in the $8\times8$ setting. For the $16\times16$ environment, it fails to complete within the imposed $45$-minute timeout. This behavior reflects the rapid growth of exact over-approximation-based RNN verification methods as both the environment size and the GRU hidden state dimension increase. In contrast, \texttt{RNN-ProVe} scales efficiently across all tasks, returning quantitative verification results (comparable with the baseline) in milliseconds. This efficiency stems from reasoning probabilistically over feasible histories. Moreover, existing RNN verification methods are not designed to handle cooperative multi-agent formulations and therefore cannot be directly applied to the BP task. In these settings, \texttt{RNN-ProVe} enables probabilistically quantitative verification of recurrent policies under decentralized execution, demonstrating its applicability beyond single-agent scenarios.

\paragraph{Ablation Study.}  
We report two additional experimental variants for the navigation tasks in Table~\ref{tab:tab_eval}. For the $4 \times 4$ environment, we compare baseline results obtained with an exact feasibility oracle to those obtained with the approximate feasibility oracle learned by \texttt{RNN-ProVe}. In this environment, where an exact oracle is achievable, we notice that the verification results are nearly identical, indicating that the histories collected during training already cover a large portion of the feasible history space. When used within the \texttt{RNN-ProVe} pipeline, this data enables learning a feasibility classifier that closely approximates the true feasible-history distribution. 
We also compare our approach against a naive \emph{Monte-Carlo} variant that removes the feasibility classifier and evaluates the policy on uniformly sampled hidden states. This leads to substantially higher violation rates, as most sampled hidden states do not correspond to realizable interaction histories and therefore induce spurious failures when fed into the policy. Overall, these experiments rule out two natural alternatives: naive linearization and bound-propagation–based hidden-state bound estimation, as well as naive Monte Carlo estimation without feasibility filtering. They further show that the proposed components are not only necessary to verify histories arising under policy execution, but also sufficient to produce accurate probabilistic estimates of behavioral violations. By operating over feasible combinations of hidden-state features, \texttt{RNN-ProVe} computes violation values that reliably reflect the true behavior of the policy.

%% file: tables/verification_results.tex
\begin{table*}[t]
    \scriptsize
    \centering
    \begin{tabular}{lccccccccccc}
    
    \textbf{Method} & \textbf{Task} & \textbf{Env size} & \textbf{GRU size} & \textbf{Feas. Oracle} & $\boldsymbol{1-\hat{e}}$ & \textbf{\# samples} & $\lvert \mathbf{h}^* \rvert$ & \textbf{Avg. Violation} & $\boldsymbol{1-\delta}$ & $\boldsymbol{\epsilon}$ & \textbf{Time (s)}\\
    \Xhline{1pt}
    \addlinespace[2pt]
    \rowcolor{gray!20}
    Baseline & Nav & 4$\times$4 & 4 & Exact &- & - & - & 1.44\% & - & - & 223.13\\
    \rowcolor{gray!20}
    Baseline  & Nav & 4$\times$4 & 4 &  Approx. & 99.97 & - & - & 1.43\% & - & - & 215.58\\
    \rowcolor{gray!20}
    Monte-Carlo & Nav & 4$\times$4 & 4 & - &- & 1M & 1M & 23.87\% & - & - & 0.0244\\
    \texttt{RNN-ProVe} & Nav & 4$\times$4 & 4 & Approx. & 99.97 & 100k & 12637 & 1.62\% & 99\% & 3.11\% & 0.0053\\
    \texttt{RNN-ProVe} & Nav & 4$\times$4 & 4 & Approx. & 99.97 & 1M & 127359 & 1.42\% & 99\% & 1.0\% & 0.0271\\
    \midrule
    \rowcolor{gray!20}
    \rowcolor{gray!20}
    Baseline & Nav & 8$\times$8 & 8 & Approx. & 99.99 & - & - & 13.01\% & - & - & 1050.53\\
    \rowcolor{gray!20}
    Monte-Carlo & Nav & 8$\times$8 & 8 & - &- & 1M & 1M & 21.24\% & - & - & 0.0244\\
    \texttt{RNN-ProVe} & Nav & 8$\times$8 & 8 & Approx. & 99.99 & 100k & 10890 & 13.04\% & 99\% & 3.33\% & 0.0024\\
    \texttt{RNN-ProVe} & Nav & 8$\times$8 & 8 & Approx. & 99.99 & 1M & 110293 & 13.37\% & 99\% & 1.05\% & 0.0207\\
    \midrule
    \rowcolor{gray!20}
    \rowcolor{gray!20}
    Baseline & Nav & 16$\times$16 & 12 & Approx. & 98.36 & - & - & - & - & - & Timeout\\
    \rowcolor{gray!20}
    Monte-Carlo & Nav & 16$\times$16 & 12& - & - & 1M & 1M & 4.94\% & - & - & 0.0244\\
    \texttt{RNN-ProVe} & Nav & 16$\times$16 & 12 & Approx. & 98.36 & 100k & 12775 & 0.64\% & 99\% & 4.7\% & 0.0028\\
    \texttt{RNN-ProVe} & Nav & 16$\times$16 & 12 & Approx. & 98.36 & 1M & 127015 & 0.79\% & 99\% & 2.61\% & 0.0223\\
    \Xhline{1pt}
    \addlinespace[2pt]
    \texttt{RNN-ProVe} & BP & 10$\times$10 & 16 & Approx. & 99.98 $\vert$ 99.95 & 100k & 20320 & 1.15\% & 99\% & 2.45\% & 0.0072\\
    \texttt{RNN-ProVe} & BP & 10$\times$10 & 16 & Approx. & 99.98 $\vert$ 99.95 & 1M & 41061 & 1.18\% & 99\% & 1.76\% & 0.0179\\
    \midrule
    \texttt{RNN-ProVe} & BP & 20$\times$20 & 32& Approx. & 99.23 $\vert$ 99.99 & 100k & 32413 & 1.51\% & 99\% & 2.69\% & 0.0140\\
    \texttt{RNN-ProVe} & BP & 20$\times$20 & 32& Approx. & 99.23 $\vert$ 99.99 & 1M & 65027 & 1.73\% & 99\% & 2.12\% & 0.0476\\
    \Xhline{1pt}
    \end{tabular}
    \caption{Quantitative verification results for RNN-based navigation (Nav) and BP tasks. Gray rows correspond to the baselines; \texttt{RNN-ProVe} reports probabilistic certificates with bounded error $\epsilon$ and confidence at least $1-\delta$.}
    \label{tab:tab_eval}
    \vspace{-3mm}
\end{table*}

%% file: sections/discussion.tex
\section{Discussion}

We presented \texttt{RNN-ProVe}, a probabilistic verification framework for recurrent policies in single- and cooperative multi-agent RL. \texttt{RNN-ProVe} addresses the challenge of reasoning over \emph{feasible histories}, which is not captured by existing RNN verification methods. By combining policy-driven sampling with statistical analysis, we produce quantitative, high-confidence certificates that estimate how frequently undesired behaviors occur over feasible hidden states. 
\texttt{RNN-ProVe} enables scalable analysis in regimes where exact methods become intractable. Our certificates depend on the coverage of feasible histories induced during training and on the accuracy of the learned feasibility classifier, dependencies that are unavoidable in data-driven settings but  characterized through bounded-error, high-confidence guarantees.

A key insight emerging from our analysis is that verification precision is governed not by tighter bounds, but by accurately capturing the combinatorial structure of feasible hidden state configurations induced by policy execution.


%% file: sections/appendix.tex
\section*{Appendix}

\section{Numerical example}
\label{appendix:numerical_example}
In the following, we illustrate interval-based robustness \textsc{RNN-Verification} using the toy RNN in Fig. \ref{fig:rnn_suppl}, assuming scalar weights $W_x = W_y = W_h = 1$ and ReLU activation. 

\begin{figure*}[h]
    \centering
    \vspace{-5pt}
    \includegraphics[width=0.5\linewidth]{images/rnn.png}
    \vspace{-8pt}
    \caption{An explanatory RNN transition.}
    \label{fig:rnn_suppl}
\end{figure*}

\begin{example}
    Let the nominal input be $\mathbf{x} = [1, 2]^\top$ (i.e., an input sequence of length 2) and perturbation radius $\varepsilon = 1$, yielding intervals $x_1 \in [0, 2]$ and $x_2 \in [1, 3]$. We initialize $\mathcal{H}_0 = \{0\}$. The verification proceeds by propagating intervals through time. For t=1, we obtain $h_1 = \mathrm{ReLU}(h_0 + x_1) = \mathrm{ReLU}(0 + [0, 2]) = [0, 2]$. Similarly for $t=2$, we have $h_2 = \mathrm{ReLU}(h_1 + x_2) = \mathrm{ReLU}([1, 5]) = [1, 5]$. Finally, $\mathcal{Y}_T = y_T = 1 \cdot h_2 = [1, 5]$. Since $\min(\mathcal{Y}_T) = 1 > 0$, the RNN is verified as robust.
\end{example}
\section{On the Hardness of the \textsc{\#RNN-Verification} Problem.} \label{appendix:hardness}

We now analyze the computational complexity of the \textsc{\#RNN-Verification} problem. 

\setcounter{theorem}{0}
\begin{theorem}
    The \textsc{\#RNN-Verification} problem is \#P-hard.
\end{theorem}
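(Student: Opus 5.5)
We prove \#P-hardness by a polynomial-time reduction from \textsc{\#DNN-Verification}, which \citet{CountingProve} show to be \#P-hard (via a parsimonious reduction from \#3-SAT). Recall that an instance of \textsc{\#DNN-Verification} consists of a feedforward ReLU network $N$, an input domain $\mathcal{X}$ (a box, w.l.o.g.\ $[0,1]^d$, or the discrete set of Boolean assignments in the counting variant), and a property. The goal is to return the volume (or count) of inputs $x\in\mathcal{X}$ with $N(x)\le 0$. We will build, in polynomial time, an instance $\mathcal{T}=\langle f,\mathcal{H},o\rangle$ of \textsc{\#RNN-Verification} (Def.~\ref{def:sharp_rnn}) such that $Vol(\Gamma(\mathcal{T}))$ equals this quantity exactly. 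A polynomial-time algorithm for \textsc{\#RNN-Verification} would then solve \textsc{\#DNN-Verification}.

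\textbf{Construction.} We take the hidden-state space to be $\mathcal{H}=\mathcal{X}$ with dimension $d$. We choose the recurrent transition so that every hidden state in $\mathcal{X}$ is realizable by some history, which makes $\mathcal{H}^*=\mathcal{H}$. Concretely, we use an identity-like recurrence $h_t=\mathrm{ReLU}(W_h h_{t-1}+W_x x_t+b_h)$ with $W_h=0$, $W_x=I$, $b_h=0$. With observations ranging over $\mathcal{X}\subseteq\mathbb{R}^d_{\ge 0}$, any $h\in\mathcal{X}$ is produced in one step from the observation $x_t=h$. Hence every $h\in\mathcal{H}$ is the hidden state of a feasible one-step history. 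For the fixed current observation $o$ we then define the one-step map $f(h,o)$ so that it ignores $o$ (e.g.\ weights from $o$ set to zero) and applies $N$ to $h$ through the appended layers. This uses the paper's convention that a post-processing linear layer encoding the behavior can be appended after the RNN cell. The result is an RNN readout computing $f(h,o)=N(h)$, of size linear in $|N|$. It follows that $\Gamma(\mathcal{T})=\{h\in\mathcal{H}^*: f(h,o)\le 0\}=\{x\in\mathcal{X}:N(x)\le 0\}$, so the two volumes (or counts) coincide. The reduction is therefore parsimonious.

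\textbf{Main obstacle.} The delicate step is justifying $\mathcal{H}^*=\mathcal{H}$, because feasibility depends on the POMDP and policy, which the definitions leave abstract. We will handle this by also constructing the trivial environment: states equal observations ranging over $\mathcal{X}$ (or its discretization), and the transition function $T_{\mathcal{O}}$ assigns positive probability to every observation from the initial state. Every one-step history is then policy-induced regardless of the actions taken. If only discrete observation sets are allowed (see the Limitations paragraph), we will use the counting version of \textsc{\#DNN-Verification} over $\{0,1\}^d$, which is still \#P-hard since it directly encodes \#3-SAT, and we measure $Vol$ by counting measure. A secondary point is that $f$ must be a legitimate RNN step rather than an arbitrary deep network. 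If a single-layer readout is insisted on, we will instead unroll $N$ over $L$ timesteps, using one recurrent step per layer with zero observations afterwards and block-structured $W_h$. This still yields a polynomial-size RNN, and feasibility is again witnessed by the first observation.
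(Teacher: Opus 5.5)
Your overall route matches the paper's: take a one-step instance, fix the observation, regard $h\mapsto f(h,o)$ as a feedforward network, and inherit \#P-hardness from \textsc{\#DNN-Verification}. You are also more careful than the paper's sketch. That sketch writes $\Gamma(\mathcal{T})=\{h\in\mathcal{H}\mid f(h,x)\le 0\}$, silently replacing $\mathcal{H}^*$ by $\mathcal{H}$. It also ignores that with finite $\mathcal{O}$ the reachable hidden states form a countable, Lebesgue-null set. Your counting-measure variant over $\{0,1\}^d$ addresses both points.

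However, the concrete construction you use to obtain $\mathcal{H}^*=\mathcal{H}$ destroys the reduction. In Def.~\ref{def:sharp_rnn}, $f$ is a single RNN:
\begin{itemize}
\item the states in $\mathcal{H}^*$ are generated by its own recurrence;
\item $h$ is the state \emph{before} the current observation;
\item $f(h,o)$ is the margin readout applied to the updated state $\sigma(W_h h+W_x o+b_h)$.
\end{itemize}
This is exactly how \texttt{RNN-ProVe} evaluates each validated $h$ on the observation $o$ of $s$. With $W_h=0$ the updated state is $\mathrm{ReLU}(o)$, so $f(h,o)$ does not depend on $h$ at all. Zeroing the weights from $o$ as well makes it constant. $\Gamma(\mathcal{T})$ is then either empty or all of $\mathcal{H}^*$, and nothing about $N$ is being counted. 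In effect you use one cell ($W_h=0$, $W_x=I$) to witness feasibility and a different map ($h$ fed to $N$, $o$ ignored) to evaluate the property, which the problem does not allow. Reading $h$ as the post-update state does not rescue it: conditioned on the current observation $o$, your recurrence forces $h=\mathrm{ReLU}(o)=o$, so the feasible set collapses to a single point.

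The repair is small. Use one cell that both injects the observation and carries the state:
\begin{itemize}
\item $h_t=\mathrm{ReLU}(h_{t-1}+x_t)$, i.e.\ $W_h=W_x=I$, $b_h=0$, with $h_0=0$;
\item the layers of $N$ appended as readout;
\item fixed current observation $o=0$.
\end{itemize}
The one-step history with $x_1=h$ reaches every $h\in\mathcal{X}$ (whether $[0,1]^d$ or $\{0,1\}^d$), so $\mathcal{H}^*=\mathcal{H}$. Moreover $f(h,0)=N(\mathrm{ReLU}(h))=N(h)$ on $\mathcal{X}$, which gives the parsimonious correspondence you want. Your environment must then allow $o=0$ to be observed \emph{after} every first step, not merely give every observation positive probability from the initial state; a fully connected transition function suffices.

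Finally, drop the unrolled fallback. Def.~\ref{def:sharp_rnn} counts the hidden states fed to the single step $f(h,o)$. After unrolling you would be counting intermediate-layer activations (plus whatever other histories reach them) rather than inputs of $N$, and parsimony is lost. The appended-layer convention, which the paper's own sketch also relies on, is enough. With these changes your argument is a more rigorous version of the paper's proof.
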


\begin{proof}[Proof Sketch]
    The hardness result relies on the established relationship between Boolean satisfiability and Neural Network verification. In detail, \citet{Reluplex} demonstrated a polynomial-time reduction from \textsc{3-SAT} to the problem of finding a satisfying input (i.e., counterexample) for a deep neural network. Given a formula $\phi$ with $n$ variables, one can construct a neural network $N_\phi: [0, 1]^n \to \mathbb{R}$ such that $N_\phi(x) > 0$ if and only if $x$ corresponds to a satisfying assignment of $\phi$. Recent work on the \textsc{\#DNN-Verification} problem \citep{CountingProve} extends this reduction to the counting domain, showing a reduction from the \textsc{\#3-SAT} \citep{valiant1979complexity}. Specifically, computing the volume of the input domain $\mathcal{X}$ for which a network satisfies a property (i.e., measuring the set $\{x \in \mathcal{X} \mid N_\phi(x) > 0\})$ is equivalent to counting the satisfying assignments of $\phi$. Thus, the exact volumetric verification of DNNs is \#P-hard.

    Let us consider a specific instance of \textsc{\#RNN-Verification} with a time horizon $T=1$. The problem requires characterizing the set of histories (hidden states) $\Gamma(\mathcal{T}) := \{h \in \mathcal{H} \mid f(h, x) \le 0\}$. If we fix the observation $x$ and consider $h$ as the input variable, determining the volume $Vol(\Gamma(\mathcal{T}))$ is equivalent to determining the preimage volume of a DNN $f$ that maps to the unsafe region $(-\infty, 0]$. Hence, from the \#P-hardness of the \textsc{\#DNN-Verification} it follows that \textsc{\#RNN-Verification} is at least as hard as \#3-SAT. Consequently, \textsc{\#RNN-Verification} is \#P-hard.
\end{proof}

\section{Tasks Description}\label{appendix:task_description}

\begin{figure}[h!]
    \centering
    \includegraphics[width=\linewidth]{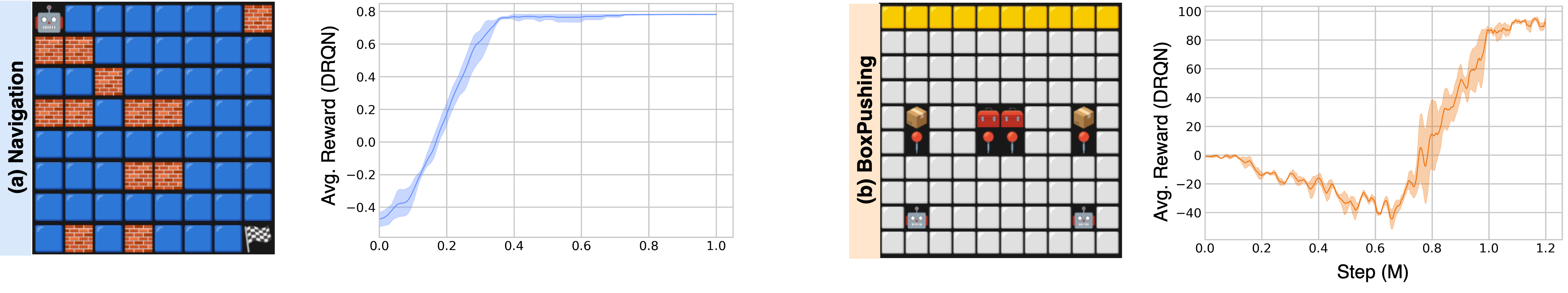}
    \caption{Environments and training performance. (Left) Single-agent navigation task with obstacles in red and the corresponding training performance of a DRQN policy, measured by average episode reward. (Right) Cooperative multi-agent BoxPushing task, where agents must jointly push a box to a goal location in yellow, and the corresponding joint training performance of the agents using a CTDE DRQN-based policy.}
    \label{fig:tasks}
\end{figure}

\paragraph{Single-agent navigation.}
We consider a partially observable single-agent navigation task defined on a discrete grid world. The agent starts in the top-left corner of the grid and must reach a fixed goal located in the bottom-right corner while avoiding collisions with static obstacles. Obstacles occupy approximately $20\%$ of the grid cells and are placed randomly, subject to the constraint that at least one collision-free path to the goal exists. At each time step, the agent can move in one of the four cardinal directions and receives a local observation that reveals the contents of the cells immediately adjacent to its current position, indicating whether neighboring cells contain free space (blue), obstacles (red), or the goal. The remainder of the grid is unobserved, requiring the agent to rely on its internal hidden state to integrate information over time. We evaluate increasingly complex variants of this task with grid sizes of $4\times4$, $8\times8$, and $16\times16$, which induce progressively larger hidden state spaces in the recurrent policy. An overview of the $8\times8$ environment and the training performance of the learned policy are shown in Fig. \ref{fig:tasks} (left).

\paragraph{Multi-agent BoxPushing.}
We next consider the fully cooperative multi-agent BoxPushing task \citep{yuchen_ma}. In this environment, two agents must coordinate to push a large box, initially located near the center of the grid, to a designated goal region marked in yellow. The box cannot be moved by a single agent and therefore requires agents to push simultaneously from appropriate positions, indicated by red pins in Fig. \ref{fig:tasks} (bottom). At each time step, agents receive highly partial observations that reveal only the contents of the cell immediately in front of them, making coordination strongly dependent on the learned hidden states and past interactions. Agents are equipped with a set of $n$-step navigation actions that move an agent to the front of each box, a push action that moves the box in front of an agent in a given direction until it reaches a wall, and single-step movements in the four cardinal directions. We evaluate increasingly challenging variants of this task by considering grid sizes of $10\times10$ and $20\times20$, which substantially increase both the coordination complexity among agents and the dimensionality of the recurrent hidden states that must be verified. An overview of the $10\times10$ BoxPushing environment and the joint training performance of the cooperating agents are shown in Fig. \ref{fig:tasks} (right).

\section{Hyperparameters} \label{appendix:hyperparams}
Table \ref{tab:hyperparams} lists the hyperparameters considered in our initial grid search and the final (best-performing) parameters used for training the policies and the classifier evaluated in Sec. \ref{sec:experiments}. 

\input{tables/hyperparams}
 
\section{Environmental Impact} \label{appendix:env_impact}
Although each individual training run is relatively computationally inexpensive, the overall experimental evaluation required running a large number of experiments on shared computing nodes, leading to cumulative environmental impacts. All experiments were conducted on a private infrastructure with an estimated carbon efficiency of approximately $0.275\,\mathrm{kgCO_2eq/kWh}$. We estimated the associated carbon emissions using the Machine Learning Impact calculator\footnote{\url{https://mlco2.github.io/impact\#compute}} and offset the resulting footprint through the Treedom platform.\footnote{\url{https://www.treedom.net}} We do not explicitly estimate or offset other categories of environmental impact, such as water usage or embodied emissions associated with hardware production, although we acknowledge their importance.

%% file: tables/hyperparams.tex
\begin{table}[h]
\centering
\caption{Hyper-parameters candidate for initial grid search tuning. Best performing parameters used for training are highlighted in bold.}
\begin{tabular}{ll}
\toprule
Learning rate                  & 3e-3, \textbf{3e-4}, 3e-5 \\
$\gamma$                       & \textbf{0.9}, 0.95, 0.99        \\
Buffer size (episodes)                    & \textbf{1000}, 2500, 5000            \\

Batch size                     & \textbf{32}, 64, 128            \\
Sampling trajectory size       & 10, 25, 50, \textbf{\textit{T}}             \\
Polyak averaging $\omega$      & \textbf{0.995}, 0.9998          \\
N° hidden layers               & \textbf{2}, 3                   \\
Hidden layers size             & \textbf{32}, 64, 128                \\ \midrule
$\mathcal{C}$ Learning rate                  & 3e-3, 3e-4, \textbf{3e-5} \\
$\mathcal{C}$ training epochs                  & 500, \textbf{1000}, 2000 \\
$\mathcal{C}$ N° hidden layers               & \textbf{2}, 3                   \\
$\mathcal{C}$ hidden layers size            & 32, \textbf{64}, 128                \\
$\mathcal{C}$ training-test split            & \textbf{80-20}               \\

\bottomrule
\end{tabular}
\label{tab:hyperparams}
\end{table}

%% file: ijcai26.bib
@inproceedings{RFProve,
  title={On the probabilistic learnability of compact neural network preimage bounds},
  author={Marzari, Luca and Bicego, Manuele and Cicalese, Ferdinando and Farinelli, Alessandro},
  booktitle={Proceedings of the AAAI Conference on Artificial Intelligence},
  volume={40},
  number={42},
  pages={35707--35714},
  year={2026}
}

@inproceedings{ModelVerification,
  author       = {Tianhao Wei and
                  Hanjiang Hu and
                  Luca Marzari and
                  Kai S. Yun and
                  Peizhi Niu and
                  Xusheng Luo and
                  Changliu Liu},
  title        = {ModelVerification.jl: {A} Comprehensive Toolbox for Formally Verifying
                  Deep Neural Networks},
  booktitle    = {Computer Aided Verification - 37th International Conference, {CAV}
                  2025},
  series       = {Lecture Notes in Computer Science},
  volume       = {15932},
  pages        = {395--408},
  publisher    = {Springer},
  year         = {2025},
  url          = {https://doi.org/10.1007/978-3-031-98679-6_18},
  doi          = {10.1007/978-3-031-98679-6_18}
}

@inproceedings{CountingProve,
  author       = {Luca Marzari and
                  Davide Corsi and
                  Ferdinando Cicalese and
                  Alessandro Farinelli},
  title        = {The {\#}DNN-Verification Problem: Counting Unsafe Inputs for Deep
                  Neural Networks},
  booktitle    = {International Joint Conference on
                  Artificial Intelligence, {IJCAI}},
  pages        = {217--224},
  year         = {2023},
  doi          = {10.24963/IJCAI.2023/25}
}

@inproceedings{tran2023rnnreach,
  title={Reachability Analysis of Recurrent Neural Networks Using Star Sets},
  author={Tran, Hoang-Dung and Johnson, Taylor T and others},
  booktitle={ACM International Conference on Hybrid Systems: Computation and Control},
  year={2023}
}

@inproceedings{carr2021verifiable,
  title={Verifiable RNN-based policies for POMDPs under temporal logic constraints},
  author={Carr, Steven and Jansen, Nils and Topcu, Ufuk},
  booktitle={International Conference on International Joint Conferences on Artificial Intelligence},
  pages={4121--4127},
  year={2021}
}

@article{everett2021reachability,
  title={Reachability analysis of neural feedback loops},
  author={Everett, Michael and Habibi, Golnaz and Sun, Chuangchuang and How, Jonathan P},
  journal={IEEE Access},
  volume={9},
  pages={163938--163953},
  year={2021},
  publisher={IEEE}
}

@article{hoeffding1963probability,
  title={Probability inequalities for sums of bounded random variables},
  author={Hoeffding, Wassily},
  journal={Journal of the American statistical association},
  volume={58},
  number={301},
  pages={13--30},
  year={1963},
  publisher={Taylor \& Francis}
}

@article{hornik1989multilayer,
  title={Multilayer feedforward networks are universal approximators},
  author={Hornik, Kurt and Stinchcombe, Maxwell and White, Halbert},
  journal={Neural networks},
  volume={2},
  number={5},
  pages={359--366},
  year={1989},
  publisher={Elsevier}
}

@inproceedings{akintunde2019verification,
  title={Verification of RNN-based neural agent-environment systems},
  author={Akintunde, Michael E and Kevorchian, Andreea and Lomuscio, Alessio and Pirovano, Edoardo},
  booktitle={AAAI Conference on Artificial Intelligence},
  pages={6006--6013},
  year={2019}
}

@inproceedings{mohammadinejad2021diffrnn,
  title={DiffRNN: differential verification of recurrent neural networks},
  author={Mohammadinejad, Sara and Paulsen, Brandon and Deshmukh, Jyotirmoy V and Wang, Chao},
  booktitle={International Conference on Formal Modeling and Analysis of Timed Systems},
  pages={117--134},
  year={2021},
  organization={Springer}
}

@inproceedings{du2021cert,
  title={Cert-RNN: Towards Certifying the Robustness of Recurrent Neural Networks.},
  author={Du, Tianyu and Ji, Shouling and Shen, Lujia and Zhang, Yao and Li, Jinfeng and Shi, Jie and Fang, Chengfang and Yin, Jianwei and Beyah, Raheem and Wang, Ting},
  booktitle={ACM SIGSAC Conference on Computer and Communications Security},
  pages={15--19},
  year={2021}
}

@inproceedings{ko2019popqorn,
  title={POPQORN: Quantifying robustness of recurrent neural networks},
  author={Ko, Ching-Yun and Lyu, Zhaoyang and Weng, Lily and Daniel, Luca and Wong, Ngai and Lin, Dahua},
  booktitle={International Conference on Machine Learning},
  pages={3468--3477},
  year={2019}
}

@article{RL,
  title={Reinforcement learning: A survey},
  author={Kaelbling, Leslie Pack and Littman, Michael L and Moore, Andrew W},
  journal={Journal of Artificial Intelligence Research},
  volume={4},
  pages={237--285},
  year={1996}
}

@article{valiant1979complexity,
  title={The complexity of computing the permanent},
  author={Valiant, Leslie G},
  journal={Theoretical Computer Science},
  volume={8},
  number={2},
  pages={189--201},
  year={1979},
  publisher={Elsevier}
}

@inproceedings{jacoby2020verifying,
  title={Verifying recurrent neural networks using invariant inference},
  author={Jacoby, Yuval and Barrett, Clark and Katz, Guy},
  booktitle={International Symposium on Automated Technology for Verification and Analysis},
  pages={57--74},
  year={2020},
  organization={Springer}
}

@inproceedings{Reluplex,
  title={Reluplex: An efficient SMT solver for verifying deep neural networks},
  author={Katz, Guy and Barrett, Clark and Dill, David L and Julian, Kyle and Kochenderfer, Mykel J},
  booktitle={International Conference on Computer Aided Verification},
  pages={97--117},
  year={2017}
}

@inproceedings{acrown,
    title={{Fast and Complete}: Enabling Complete Neural Network Verification with Rapid and Massively Parallel Incomplete Verifiers},
    author={Kaidi Xu and Huan Zhang and Shiqi Wang and Yihan Wang and Suman Jana and Xue Lin and Cho-Jui Hsieh},
    booktitle={International Conference on Learning Representations},
    year={2021},
    url={https://openreview.net/forum?id=nVZtXBI6LNn}
}

@inproceedings{ECAI25,
  author       = {Luca Marzari and
                  Isabella Mastroeni and
                  Alessandro Farinelli},
  title        = {Advancing Neural Network Verification Through Hierarchical Safety
                  Abstract Interpretation},
  booktitle    = {{ECAI} 2025 - 28th European Conference on Artificial Intelligence},
  pages        = {1736--1743},
  year         = {2025},
  doi          = {10.3233/FAIA251002},
}

@article{bcrown,
  title={Beta-crown: Efficient bound propagation with per-neuron split constraints for neural network robustness verification},
  author={Wang, Shiqi and Zhang, Huan and Xu, Kaidi and Lin, Xue and Jana, Suman and Hsieh, Cho-Jui and Kolter, J Zico},
  journal={Advances in Neural Information Processing Systems},
  volume={34},
  pages={29909--29921},
  year={2021}
}

@book{RNN,
  title={Recurrent neural networks: design and applications},
  author={Medsker, Larry and Jain, Lakhmi C},
  year={1999},
  publisher={CRC press}
}

@article{LSTM,
  title={Long short-term memory},
  author={Hochreiter, Sepp and Schmidhuber, J{\"u}rgen},
  journal={Neural computation},
  volume={9},
  number={8},
  pages={1735--1780},
  year={1997},
  publisher={MIT press}
}

@article{babSplitRelu,
  title={Branch and bound for piecewise linear neural network verification},
  author={Bunel, Rudy and Lu, Jingyue and Turkaslan, Ilker and Torr, Philip HS and Kohli, Pushmeet and Kumar, M Pawan},
  journal={Journal of Machine Learning Research},
  volume={21},
  number={42},
  pages={1--39},
  year={2020}
}

@article{INVPROP,
  title={Provably bounding neural network preimages},
  author={Kotha, Suhas and Brix, Christopher and Kolter, J Zico and Dvijotham, Krishnamurthy and Zhang, Huan},
  journal={Advances in Neural Information Processing Systems},
  year={2024}
}

@article{GRU,
  title={On the properties of neural machine translation: Encoder-decoder approaches},
  author={Cho, Kyunghyun and Van Merri{\"e}nboer, Bart and Bahdanau, Dzmitry and Bengio, Yoshua},
  journal={arXiv preprint arXiv:1409.1259},
  year={2014}
}

@inproceedings{zhang2024provable,
  title={Provable preimage under-approximation for neural networks},
  author={Zhang, Xiyue and Wang, Benjie and Kwiatkowska, Marta},
  booktitle={International Conference on Tools and Algorithms for the Construction and Analysis of Systems},
  pages={3--23},
  year={2024},
  organization={Springer}
}

@article{LiuSurvey,
  title={Algorithms for verifying deep neural networks},
  author={Liu, Changliu and Arnon, Tomer and Lazarus, Christopher and Strong, Christopher and Barrett, Clark and Kochenderfer, Mykel J and others},
  journal={Foundations and Trends{\textregistered} in Optimization},
  volume={4},
  number={3-4},
  pages={244--404},
  year={2021},
  publisher={Now Publishers, Inc.}
}

@article{pomdp,
title = {Optimal Control of Markov Processes with Incomplete State Information I},
author = {{\AA}str{\"o}m, Karl Johan},
year = {1965},
doi = {10.1016/0022-247X(65)90154-X},
language = {English},
volume = {10},
pages = {174--205},
journal = {Journal of Mathematical Analysis and Applications},
issn = {0022-247X},
publisher = {Academic Press}
}

@book{decpomdp,
    author = {Frans A Oliehoek and Christopher Amato},
    title = {A concise introduction to decentralized POMDPs},
    publisher = {Springer},
    year = {2016}
}

@inproceedings{papoudakis2021benchmarking,
title={Benchmarking Multi-Agent Deep Reinforcement Learning Algorithms in Cooperative Tasks},
author={Georgios Papoudakis and Filippos Christianos and Lukas Sch{\"a}fer and Stefano V Albrecht},
booktitle={Conference on Neural Information Processing Systems Datasets and Benchmarks Track},
year={2021},
}

@article{drqn,
    title={Deep Recurrent Q-Learning for Partially Observable MDPs}, 
    author={Matthew Hausknecht and Peter Stone},
    year={2017},
    journal={arXiv preprint arXiv:1507.06527}
}

@inproceedings{yuchen_ma,
    title = {Macro-Action-Based Deep Multi-Agent Reinforcement Learning},
    author = {Xiao, Yuchen and Hoffman, Joshua and Amato, Christopher},
    booktitle = {Conference on Robot Learning},
    year = {2020}
}

@article{amato_survey,
    title={An Introduction to Centralized Training for Decentralized Execution in Cooperative Multi-Agent Reinforcement Learning}, 
    author={Christopher Amato},
    year={2024},
    journal={arXiv preprint arXiv:2409.03052}
}

@inproceedings{eps_retrain,
  author       = {Luca Marzari and
                  Priya L. Donti and
                  Changliu Liu and
                  Enrico Marchesini},
  title        = {Improving Policy Optimization via {\(\epsilon\)}-Retrain},
  booktitle    = {Proceedings of the 24th International Conference on Autonomous Agents
                  and Multiagent Systems, {AAMAS} 2025},
  pages        = {1464--1472},
  publisher    = {International Foundation for Autonomous Agents and Multiagent Systems
                  / {ACM}},
  year         = {2025},
  url          = {https://dl.acm.org/doi/10.5555/3709347.3743780},
  doi          = {10.5555/3709347.3743780},
}

@inproceedings{eProve,
  author       = {Luca Marzari and
                  Davide Corsi and
                  Enrico Marchesini and
                  Alessandro Farinelli and
                  Ferdinando Cicalese},
  title        = {Enumerating Safe Regions in Deep Neural Networks with Provable Probabilistic
                  Guarantees},
  booktitle    = {Conference on Artificial Intelligence (AAAI)},
  pages        = {21387--21394},
  year         = {2024},
  doi          = {10.1609/AAAI.V38I19.30134},
}

@inproceedings{aamas23,
  author       = {Enrico Marchesini and
                  Luca Marzari and
                  Alessandro Farinelli and
                  Christopher Amato},
  title        = {Safe Deep Reinforcement Learning by Verifying Task-Level Properties},
  booktitle    = {Proceedings of the 2023 International Conference on Autonomous Agents
                  and Multiagent Systems, {AAMAS} 2023},
  pages        = {1466--1475},
  publisher    = {{ACM}},
  year         = {2023},
  url          = {https://dl.acm.org/doi/10.5555/3545946.3598799},
  doi          = {10.5555/3545946.3598799}
}

@inproceedings{CROP,
  author       = {Luca Marzari and
                  Enrico Marchesini and
                  Alessandro Farinelli},
  title        = {Online Safety Property Collection and Refinement for Safe Deep Reinforcement
                  Learning in Mapless Navigation},
  booktitle    = {{IEEE} International Conference on Robotics and Automation, ({ICRA})},
  pages        = {7133--7139},
  publisher    = {{IEEE}},
  year         = {2023},
  url          = {https://doi.org/10.1109/ICRA48891.2023.10161312},
  doi          = {10.1109/ICRA48891.2023.10161312},
}

@article{marchesini2025rl2grid,
  title={RL2Grid: Benchmarking Reinforcement Learning in Power Grid Operations},
  author={Marchesini, Enrico and Donnot, Benjamin and Crozier, Constance and Dytham, Ian and Merz, Christian and Schewe, Lars and Westerbeck, Nico and Wu, Cathy and Marot, Antoine and Donti, Priya L},
  journal={arXiv preprint arXiv:2503.23101},
  year={2025}
}

@ARTICLE{ral,
  author={Marzari, Luca and Trotti, Francesco and Marchesini, Enrico and Farinelli, Alessandro},
  journal={IEEE Robotics and Automation Letters}, 
  title={Designing Control Barrier Function via Probabilistic Enumeration for Safe Reinforcement Learning Navigation}, 
  year={2025},
  volume={10},
  number={10},
  pages={9630-9637},
  doi={10.1109/LRA.2025.3596431}}

@article{TIST, 
author = {Marzari, Luca and Cicalese, Ferdinando and Farinelli, Alessandro and Amato, Christopher and Marchesini, Enrico}, 
title = {Verifying Online Safety Properties for Safe Deep Reinforcement Learning}, 
year = {2025}, 
publisher = {Association for Computing Machinery}, 
issn = {2157-6904}, 
url = {https://doi.org/10.1145/3770068}, 
doi = {10.1145/3770068},
journal = {ACM Trans. Intell. Syst. Technol.}, 
month = sep}

@inproceedings{marl2grid_iclr2026,
    title={{MARL}2Grid-{TR}: A Multi-Agent {RL} Benchmark in Power Grid Operations},
    author={Enrico Marchesini and Eva Boguslawski and Alessandro Leite and Christopher Amato and Matthieu DUSSARTRE and Marc Schoenauer and Benjamin Donnot and Priya L. Donti},
    booktitle={The Fourteenth International Conference on Learning Representations},
    year={2026},
    url={https://openreview.net/forum?id=mpAMH1OyMO}
}

@inproceedings{aydeniz_2025,
author = {Aydeniz, Ayhan Alp and Marchesini, Enrico and Loftin, Robert and Amato, Christopher and Tumer, Kagan},
title = {Safe Entropic Agents under Team Constraints},
year = {2025},
booktitle = {International Conference on Autonomous Agents and Multiagent Systems (AAMAS)},
pages = {2411–2413},
}

@inproceedings{aydeniz_2024,
author = {Aydeniz, Ayhan Alp and Marchesini, Enrico and Amato, Christopher and Tumer, Kagan},
title = {Entropy Seeking Constrained Multiagent Reinforcement Learning},
year = {2024},
booktitle = {International Conference on Autonomous Agents and Multiagent Systems (AAMAS)},
pages = {2141–2143},
}

@article{marzari2026varepsilon,
  title={$\varepsilon$-retraining reinforcement learning algorithms},
  author={Marzari, Luca and Liu, Changliu and Donti, Priya L and Marchesini, Enrico},
  journal={Autonomous Agents and Multi-Agent Systems},
  volume={40},
  number={1},
  pages={24},
  year={2026},
  publisher={Springer}
}

@inproceedings{valuefact,
author = {Marchesini, Enrico and Baisero, Andrea and Bhati, Rupali and Amato, Christopher},
title = {On Stateful Value Factorization in Multi-Agent Reinforcement Learning},
year = {2025},
booktitle = {International Conference on Autonomous Agents and Multiagent Systems (AAMAS)},
pages = {1445–1453},
}
